\providecommand{\algorithmname}{Algorithm}
 \theoremstyle{definition}
  \theoremstyle{definition}
  \theoremstyle{plain}
  \newtheorem{lem}{Lemma}
\theoremstyle{plain}
\newtheorem{thm}{Theorem}
\DeclareMathOperator*{\argmax}{arg\,max}
\DeclareMathOperator*{\argmin}{arg\,min}
\DeclareMathOperator{\diam}{diam}
\newcommand{\x}{{\mathbf{x}}}
  \providecommand{\examplename}{Example}
\newcommand{\OFW}{{\textsf{Meta-Frank-Wolfe}}\xspace}
\newcommand{\OGA}{{\textsf{Online Gradient Ascent}}\xspace}
\newcommand{\SGA}{{\textsf{Surrogate Gradient Ascent}}\xspace}
\newcommand{\Rand}{{\textsf{Random100}}\xspace}
\crefname{lem}{Lemma}{Lemmas}
\crefname{thm}{Theorem}{Theorems}
\crefname{appsec}{Appendix}{Appendices}
\title{Online Continuous Submodular Maximization}
\author[1,2]{Lin Chen}
\author[3]{Hamed Hassani}
\author[1,2]{Amin Karbasi}
\affil[1]{Yale Institute for Network Science} 
\affil[2]{Department of Electrical Engineering, Yale University}
\affil[3]{Department of Electrical and Systems Engineering, University of 	
Pennsylvania}
\affil[ ]{\normalsize \texttt{\{lin.chen, amin.karbasi\}@yale.edu, 
hassani@seas.upenn.edu}}
\date{}
\begin{document}

\maketitle

\begin{abstract}
In this paper, we consider an online optimization process, where the 
objective functions are not convex (nor concave) but instead belong 
to a broad class of  continuous submodular functions. We first 
propose a variant of the Frank-Wolfe algorithm that has access to the 
full gradient of the objective functions. We show that it achieves a 
regret bound of  
$ O(\sqrt{T}) $  (where $ T $ is the horizon of the online 
optimization problem)  against a $(1-1/e)$-approximation to the best 
feasible solution in
hindsight.  However, in many scenarios, only an unbiased estimate of 
the gradients are available. For such settings, we then propose an 
online stochastic gradient ascent algorithm that also achieves a 
regret bound of  $ 
O(\sqrt{T}) $ regret, albeit against a weaker $1/2$-approximation to 
the best feasible solution in
hindsight. We also generalize our results to $ \gamma $-weakly submodular 
functions and prove the same sublinear regret bounds. Finally, we demonstrate 
the efficiency of our algorithms on a few problem instances, including  
non-convex/non-concave quadratic programs, multilinear extensions of submodular 
set functions, and D-optimal design. 
%
\end{abstract}

\section{Introduction}

In the past few years, the era of big data has necessitated scalable machine learning techniques that can process an unprecedentedly growing amount of data, including data generated by users (e.g., pictures, videos, and tweets), wearable devices (e.g., statistics of steps, walking and running distance) and monitoring sensors (e.g., satellite and traffic images). At the same time, it is practically impossible to lay out  an exact mathematical model for such data generating processes. Thus, any optimization techniques applied to the data should be robust against  imperfect and even fundamentally unavailable knowledge. 

A  robust approach to optimization (in the face of uncertainty) in many fields, 
including artificial intelligence, statistics, and machine learning,  is to 
look 
at the optimization itself as a process \citep{hazan2016introduction} that 
learns  from experience as more aspects of the problem are observed. 
This framework is formally known as \emph{online optimization} and is performed 
in a sequence of consecutive rounds. In each round, the learner/algorithm has 
to choose an action (from the set of feasible actions) and then  the 
environment/adversary reveals a reward function. The goal is then to minimize 
regret, a metric borrowed from game theory, that measures the difference 
between the accumulated reward received by the algorithm  and that of the best 
\emph{fixed} action in hindsight. When the objective functions are concave 
and the feasible set forms a convex body, the problem has been extensively 
studied in the machine learning community under the name of \emph{online 
convex optimization} (OCO). It is well known that any algorithm for OCO incurs 
$ \Omega(\sqrt{T})$ regret in the worst case ~\citep{hazan2016introduction}. 
There are also several algorithms  that match this lower bound such as online 
gradient descent (OGD)~\citep{zinkevich2003online} and 
regularized-follow-the-leader 
(RFTL)~\citep{abernethy2008efficient,shalev2007primal,shalev2007online}.

%
%
%
%

Even though optimizing convex/concave functions can be done efficiently, most 
problems in statistics and artificial intelligence are non-convex. Examples 
include training deep neural networks,  learning latent variables, non-negative 
matrix factorization, Bayesian inference, and clustering, among many others. As 
a result, there has been a burst of recent research to directly optimize such 
functions. Due to the fact that in general it is NP-hard to compute the global 
optimum of a non-convex function, most non-convex optimization algorithms focus 
on finding a local optimum. Naturally, for online non-convex optimization 
(ONCO) one needs to define an appropriate  notion of regret related to 
convergence to an (approximate) local optimum \citep{hazan2017efficient}. 

In this work, we consider a rich subclass of non-convex/non-concave reward functions called \emph{continuous submodular functions}
\citep{wolsey1982analysis,bach2015submodular,vondrak2007submodularity}.
It has been very recently established that in the offline setting, first order 
methods provide tight approximation 
guarantees~\citep{chekuri2015multiplicative,bian16guaranteed,hassani2017gradient}.
To the best of our knowledge,  our work is the first that systematically studies the online continuous submodular maximization problem and provides \emph{no-regret} guarantees along with developing efficient algorithms. 

%
%

\paragraph{Our contributions} In summary, for monotone and  
continuous (weakly) DR-submodular reward functions\footnote{A 
DR-submodular 
function is a function that is defined on a continuous domain and 
exhibits the diminishing returns property. We present its formal 
definition in~\cref{sec:submodularity}.}, and subject to a general 
convex body (not necessarily down-closed),  we propose two 
algorithms, both with sublinear regret bounds, depending on what 
side information is available regarding the gradients. 
\begin{itemize}
	\item When the gradients are available, we propose \OFW, a variant of a 
	Frank-Wolfe algorithm,  that achieves  a $(1-1/e)$ approximation factor of 
	the best fixed offline solution in hindsight up to an $ O(\sqrt{T}) $ 
	regret term, where $ T $ is the horizon of the online maximization problem. 
    \item When only  unbiased estimates of the gradients are available, we 
    propose \OGA,  that achieves  a $1/2$ approximation factor of the best 
    fixed offline solution in hindsight up to an $ O(\sqrt{T}) $ regret term. 
    \item More generally, for $ \gamma $-weakly DR-submodular functions, we 
    show that \OGA  yields a $ \frac{\gamma^2}{\gamma^2+1} $ approximation 
    guarantee to the best fixed offline solution in hindsight up to an $ 
    O(\sqrt{T}) $ regret term ($ \gamma=1 $ corresponds to a DR-submodular 
    function). 
%
\end{itemize}
%

\section{Preliminaries}
In this section, we precisely define the concepts that we will use throughout the paper. 

\subsection{Notation
}

\paragraph{Projection}
As we will discuss the projected (stochastic) gradient ascent later in \cref{sub:grad-ascent}, we introduce the notation of projection operator here, which is denoted by
\[ \Pi_{\mathcal{P} }(\mathbf{x}) \triangleq \argmin_{\mathbf{v} \in \mathcal{P} } \lVert \mathbf{x} - \mathbf{v} \rVert.
 \]
 Intuitively, the projection of point $ \x $ onto a convex set $ \mathcal{P} $ is a point in $ \mathcal{P} $ that is closest to $ \x $.
 
\paragraph{Radius and Diameter}

For any set of points $ S $, its radius $ \rho(S) $ is defined to be 
$ \sup_{\x \in S} \lVert \x \rVert $ while its diameter $ \diam(S) $ 
is defined to be $ \sup_{\mathbf{x},\mathbf{y}\in S} \lVert 
\mathbf{x}-\mathbf{y} \rVert $. By the triangle equality, we 
immediately have $ \diam(S) \leq 2\rho(S) $.

\paragraph{Smoothness}
To derive guarantees for the proposed algorithm, we will make the  assumption 
that the gradients of the objective functions satisfy the Lipschitz condition.
	A differentiable function $ f:\mathcal{X} \subseteq\mathbb{R}^n \to 
	\mathbb{R} $ is said to be \emph{$ \beta $-smooth} if for any $ 
	\mathbf{x},\mathbf{y} \in \mathcal{X} $, we have $ \lVert \nabla 
	f(\mathbf{x}) - \nabla f(\mathbf{y}) \rVert \leq \beta \lVert 
	\mathbf{x}-\mathbf{y} \rVert $.
\subsection{Submodularity}\label{sec:submodularity}
\paragraph{Submodular Functions on Lattices}
Suppose that $(L,\vee,\wedge)$ is a lattice\footnote{A lattice is a set $ L $ 
equipped with two commutative and associative binary operations $ \vee $ and $ 
\wedge $ connected by the absorption law, i.e., $ a\vee(a\wedge b)=a $ and $ 
a\wedge (a \vee b)=a $, $ \forall a,b\in L $~\citep{sankappanavar1981course}.}. 
A function $f:L\to\mathbb{R}$
is said to be \emph{submodular}~\citep{topkis1978minimizing} if $\forall x,y\in 
L$, we have
\[
f(x)+f(y)\geq f(x\vee y)+f(x\wedge y).
\]
Furthermore, a function $ f:L\to \mathbb{R} $ is \emph{monotone} if $ \forall x,y\in L $ such that $ x\leq_L y $, we have $ f(x)\leq f(y) $, 
where $ \leq_L $ is the partial order defined by  lattice $ L $\footnote{In a 
lattice, we define $ a\leq_{L} b  $ if $ a=a\wedge b 
$~\citep{sankappanavar1981course}}.

For any set $ E $, its power set  $ 2^E $ equipped with set union $ \cup $ and 
intersection $ \cap $ is an instance of lattice. In fact, submodular functions 
on the lattice $ (2^E,\cup, \cap) $ are precisely the submodular set functions 
that have been extensively studied in the past 
~\citep{nemhauser1978analysis,fujishige2005submodular}.
If we let $ [C] $ denote $ \{1,2,3,\ldots,C \} $, then $ [C]^n $ and $ 
\mathbb{Z}^n $ are bounded and unbounded integer lattices equipped with 
entrywise maximum ($ \vee $) and minimum ($ \wedge $). This construction 
corresponds to submodular functions on integer 
lattices~\citep{gottschalk2015submodular,soma2016maximizing}.

\paragraph{Continuous Submodularity}
In contrast to the above discrete scenarios, we focus on continuous domains in this paper.
The set $\mathcal{X}\triangleq\prod_{i=1}^{n}\mathcal{X}_{i}\subseteq\mathbb{R}_+^{n}$,
where $\mathcal{X}_{i}$'s are closed intervals of $\mathbb{R}_+$, is also equipped
with a natural lattice structure where $\vee$ and $\wedge$ are entrywise
maximum and entrywise minimum, respectively, i.e., for any ${\bf x},{\bf y}\in\mathcal{X}\subseteq\mathbb{R}^{n}$,
the $i$-th component of ${\bf x}\vee{\bf y}$ is $\max\{x_{i},y_{i}\}$
and the $i$-th component of ${\bf x}\wedge{\bf y}$ is $\min\{x_{i},y_{i}\}$.
A function $f:\prod_{i=1}^n \mathcal{X}_i \to \mathbb{R}_+$ is called \emph{continuous submodular} if it is submodular under this lattice.
When the function $f$ is
twice differentiable, it is continuous submodular if and only if all
off-diagonal entries of its Hessian are non-positive,
i.e., 
\[
\forall i\neq j,\forall{\bf x}\in\mathcal{X},\frac{\partial^{2}f({\bf x})}{\partial x_{i}\partial x_{j}}\leq0.
\]
Without loss of generality, we assume that $\mathcal{X}_i=[0,b_i]$, $\forall 
1\leq i\leq n$. If $\mathcal{X}_i=[c_i,d_i]$ and $f$ is continuous submodular 
on $\prod_{i=1}^n [c_i,d_i]$, we can consider another continuous submodular 
function $\tilde{f}$ defined on $\prod_{i=1}^n [0,d_i-c_i]$ such that 
$\tilde{f}(\x)=f(\x+\mathbf{c})$.

\paragraph{DR-Submodularity}
In this paper, we are mainly interested in 
a subclass of differentiable continuous submodular functions that 
exhibit diminishing returns~\citep{bian16guaranteed}, i.e., for every $ 
\mathbf{x},\mathbf{y}\in \mathcal{X} $, $ 
\mathbf{x}\leq \mathbf{y} $ elementwise implies 
$$ \nabla f({\bf x})\geq\nabla f({\bf y})
$$
elementwise, which indicates that the gradient is an antitone 
mapping~\citep{bian16guaranteed,eghbali2016designing}.  When the 
 function $f$ is twice differentiable,  DR-submodularity is equivalent to 
 \[
\forall i,j,\forall{\bf x}\in\mathcal{X},\frac{\partial^{2}f({\bf x})}{\partial x_{i}\partial x_{j}}\leq0.
\]
Twice differentiable DR-submodular functions are also called \textit{smooth} 
submodular functions~\citep{vondrak2007submodularity}.
 
%
%
We say that a function $f$ is \textit{weakly} DR-submodular with parameter 
$\gamma$ \citep{hassani2017gradient}
if 
\[
\gamma=\inf_{{\bf x},{\bf y}\in\mathcal{X},{\bf x}\leq{\bf y}}\inf_{i\in[n]}\frac{[\nabla f({\bf x})]_{i}}{[\nabla f({\bf y})]_{i}},
\]
where $ [\nabla f(\mathbf{x})]_i  = \frac{\partial 
f(\mathbf{x})}{\partial x_i}$ is the $ i $-th component of the 
gradient. If the function is monotone, we have $ \gamma \geq 0 $.
Note that a differentiable DR-submodular function is weakly submodular
with parameter $\gamma=1$. 

In this work, we focus on monotone continuous (weakly) DR-submodular 
functions.

\paragraph{Multilinear Extension}
An important example of continuous DR-submodular functions is the multilinear extension of a submodular set function. Given a monotone submodular set function $ W :2^{\Omega} \to \mathbb{R}_+ $ defined on a ground set $\Omega$, its multilinear extension $ \bar{f}:[0,1]^{|\Omega|} \to \mathbb{R} $ is defined as 
 \[ \bar{f}(\mathbf{x}) = \sum_{S \subseteq \Omega } W(S) \prod_{i\in S} x_i 
 \prod_{ j \notin S } (1-x_j),  \] is monotone 
 DR-submodular~\citep{calinescu2011maximizing}. In general, it is 
 computationally intractable to compute the multilinear extensions. However, 
 for the weighted coverage functions~\citep{karimi2017stochastic}, they have an 
 interesting connection to concavity. 
Suppose that $ U $ is a finite set and let $ G:2^U\to \mathbb{R} $ be a nonnegative modular function such that $ G(S)\triangleq \sum_{u\in U} w(u) $, where $ w(u)\geq 0 $ for all $ u\in U $. We have a finite collection $ \Omega = \{B_i: 1\leq i\leq n \} $ of subsets of $ U $. The weighted coverage function $ W :2^{\Omega} \to \mathbb{R}_{\geq 0} $ is defined as \[ 
W(S) \triangleq G( \bigcup_{B_i \in S} B_i ),\forall S\subseteq \Omega.
\]
\citet{karimi2017stochastic} showed that the multilinear extension $ f:[0,1]^{n} \to \mathbb{R} $ is 
\[ 
\bar{f}(\mathbf{x}) = \sum_{u\in U} w(u) \left( 1-\prod_{B_i\in \Omega:u\in B_i } (1-x_i) \right).
\]
They showed that the multilinear extension has a concave upper bound. In fact, in light of the Fenchel concave biconjugate, they consider a concave function \[
\tilde{f}(\mathbf{x}) \triangleq \sum_{u\in U} w(u) \min\left\{1, \sum_{ B_i\in \Omega : u\in B_i } x_i \right\}
\]   
and showed a key squeeze relation
\[ (1-1/e)\tilde{f}(\mathbf{x}) \leq \bar{f}(\mathbf{x}) \leq 
\tilde{f}(\mathbf{x}), \quad \forall \mathbf{x}\in [0,1]^n. \]

\subsection{Online Continuous Submodular Maximization}\label{sub:protocol}

\begin{figure}[bth]
	\begin{framed}
		\begin{algorithmic}[1]
			\Require{convex set $\mathcal{P}$, horizon $T$} 
			\Ensure{$\{\mathbf{x}_t:1\leq t\leq T \}$}
			\State Determine $\mathbf{x}_1\in \mathcal{P}$ \Comment{\emph{to be designed}}
			\For{$t\gets 1,2,3,\ldots, T$}
			\State{Play $\mathbf{x}_t$, observe reward $f_t(\mathbf{x}_t)$}
			\State Observe $f_t$  and determine $\mathbf{x}_{t+1}\in \mathcal{P}$ \Comment{\emph{to be designed}}
			\EndFor
		\end{algorithmic}
	\end{framed}
\end{figure}


	The general protocol of online continuous submodular maximization
is given as follows. At iteration $t$ (going from $1$ to $T$), the 
online algorithm chooses $\mathbf{x}_t\in \mathcal{P}$. After 
committing to this choice, a monotone DR-submodular function $f_t$ is 
revealed and the algorithm receives the reward $f_t(\mathbf{x}_t)$. 
The goal is to minimize \textit{regret} which is typically defined as 
the difference between the total award that the algorithm accumulated and 
that of the best fixed decision in hindsight. Note that even in the 
offline setting, maximizing a monotone DR-submodular function subject 
to a convex  constraint can only be done approximately in polynomial 
time unless $ \mathbf{RP}=\mathbf{NP} $~\citep{bian16guaranteed}. 
Thus, we instead 
define 
the \emph{$ \alpha $-regret} of an algorithm $ \mathcal{A} $ as 
follows~\citep{streeter2009online, kakade2009playing}: 
\[ 
\mathcal{R}_{\alpha}(\mathcal{A}, T) \triangleq \alpha \max_{\x \in \mathcal{P}} \sum_{t=1}^{T} f_t(\mathbf{x}) - \sum_{t=1}^{T} f_t(\x_t),
 \]
 where $ \alpha $ is  the approximation ratio. In the deterministic setting when full access to the gradients of $f_t$'s is possible, the best polynomial-time approximation guarantee in the offline setting is $\alpha=1-1/e$,
  using a variant of the Frank-Wolfe algorithm,
  unless $ \mathbf{RP}=\mathbf{NP} $~\citep{bian16guaranteed}. In contrast, for 
  the stochastic situations where only unbiased estimates of gradients are 
  given, the best known approximation guarantee (in the offline setting) is 
  $\alpha=1/2$ \citep{hassani2017gradient}, using stochastic gradient ascent. 
  It is also known that stochastic gradient ascent cannot achieve a better 
  approximation guarantee in general~\citep{hassani2017gradient, 
  vondrak2011submodular}. 
\section{Algorithms and Main Results}
In this section, we describe our online algorithms \OFW  and \OGA for a sequence of monotone DR-submodular functions,  in the no-regret setting. 

\subsection{$(1-1/e)$ Guarantee via \OFW}
We begin by proposing the \OFW algorithm that achieves $ (1-1/e) $ fraction of the global maximum in hindsight up to $ O(\sqrt{T}) $ regret.
Our algorithm is based on the Frank-Wolfe variant proposed 
in~\citep{bian16guaranteed} for maximizing monotone and continuous 
DR-submodular functions and the idea of meta-actions proposed 
in~\citep{streeter2009online}. Unlike \citep{bian16guaranteed}, we 
consider a general convex body $\mathcal{P}$ as the constraint set 
and do not assume that it is down-closed.  We use meta-actions to 
convert offline algorithms into online algorithms. To be precise, 
let us consider the first iteration and the first objective function 
$f_1$ of our online optimization setting. Note that $f_1$ remains 
unknown until the algorithm commits to a choice. If we were in the 
offline setting, we could have used the Frank-Wolfe variant proposed 
in~\citep{bian16guaranteed}, say ran it for $k$ iterations, in order 
to maximize $f_1$. In each iteration, we would have found a vector 
$\mathbf{v}_k\in \mathcal{P}$ that maximizes $\langle \mathbf{v}_k, 
\nabla f_1(\mathbf{x}_k)\rangle$ and performed the update 
$$\mathbf{x}_{k+1}\gets \mathbf{x}_k + \frac{1}{K} \mathbf{v}_k.$$ The 
idea of meta-actions is to mimic this process in an online setting as 
follows. We run $K$ instances $\{\mathcal{E}^k: 1\leq k\leq K\}$ of 
an off-the-shelf online linear maximization algorithm, such as  
Regularized-Follow-The-Leader (RFTL)~\citep{hazan2016introduction}. 
Here $K$ denotes the number of iterations of the offline Frank-Wolfe 
algorithm that we intend to mimic. Thus, to maximize $\langle \cdot, 
\nabla f_1(\mathbf{x}_k)\rangle$, where $\nabla f_1(\mathbf{x}_k)$ is 
the unknown linear objective function of the online linear 
maximization problem, we simply use $\mathcal{E}^k$. Once the 
function $f_1$ is revealed to the algorithm, it knows each linear 
objective function $\nabla f_1(\mathbf{x}_k)$ and its corresponding 
inner product $\langle \mathbf{v}_k, \nabla f_1(\mathbf{x}_k)\rangle$. 
Now, we simply feed each online algorithm $ \mathcal{E}_k $ with the 
reward $ \langle \mathbf{v}_k, \nabla f_1(\mathbf{x}_k)\rangle $. 
 For any subsequent function $f_t$ ($t\geq 2$), we repeat the above 
 process. Note that for an RFTL algorithm the regret is bounded by 
 $O(\sqrt{T})$ (in fact, this is true for many choices of no-regret 
 algorithms). This idea combined with the fact that the Frank-Wolfe 
 algorithm can be used to maximize a monotone and continuous 
 DR-submodular function and attain $(1-1/e)$ fraction of the optimum 
 solution suffices to prove that $(1-1/e)$-regret of \OFW is also 
 bounded by $O(\sqrt{T})$. The precise description of  \OFW is 
 outlined in~\cref{alg:Online-Continuous-Submodular}. Recall that the 
 \emph{positive 
 orthant} of the Euclidean space $ \mathbb{R}^n $ is $ \{ \mathbf{x}\in 
 \mathbb{R}^n: x_i\geq 0, \forall 1\leq i\leq n \} $.

\begin{algorithm}[bth]
	\begin{algorithmic}[1] 
		\Require{
		 $\mathcal{P}$ is a 
        convex set in the positive orthant, and  
        $T$ is the horizon.
        } 
		\Ensure{$\{\mathbf{x}_t:1\leq t\leq T \}$} 
		\State{Initialize $K$ Regularized-Follow-The-Leader (RFTL) algorithm instances $\{\mathcal{E}^{k}: 0\leq k < K \}$ for maximizing linear cost functions over $\mathcal{P}$ }
		\For{$t\gets 1,2,3,\ldots, T$} 
		\For{$k \gets 0,1,2,\ldots, K-1$}
		\State{Let $\mathbf{v}^{k}_t $ be the vector selected by $\mathcal{E}^{k}$}
		\EndFor
		\State{$\mathbf{x}_t \gets \frac{1}{K} \sum_{k=0}^{K-1}  
		\mathbf{v}^k_t$}
		\State{Play $ \mathbf{x}_t $, receive reward $f_t(\mathbf{x}_t)$ and 
		observe $f_t$} 
		\State{$\forall 0\leq k \leq K, \mathbf{x}_t(k)\gets 
		1_{\{k>0\}} \frac{1}{K} \sum_{s=0}^{k-1}  \mathbf{v}^s_t $}
		\For{$k\gets 0,1,2,\ldots, K-1$}
		\State{Feed back $\langle \mathbf{v}^{k}_t , \nabla f_t(\mathbf{x}_t(k))\rangle$ as the payoff to be received by $\mathcal{E}^k$}
		\EndFor
		\EndFor
	\end{algorithmic}\caption{\textsf{Meta-Frank-Wolfe} 
	\label{alg:Online-Continuous-Submodular}}
\end{algorithm}

In the following theorem, we bound the $(1-1/e)$-regret of \OFW.

\begin{thm}\label{thm:guarantee-frank-wolfe}
	\emph{\textbf{(Proof in~\cref{sec:proof-frank-wolfe})}}
	Assume that $ f_t $ is monotone DR-submodular and $ \beta $-smooth for 
	every $ t $. 
	By using Algorithm~\ref{alg:Online-Continuous-Submodular},
	 we obtain 
	\begin{align*}
	& (1-1/e)\sum_{t=1}^{T}f_{t}({\bf x}^{*})-\sum_{t=1}^{T}f_{t}({\bf x}_{t})\\
	\leq & -e^{-1}\sum_{t=1}^{T}f_{t}(0)+2DG\sqrt{T}+\frac{\beta R^2 T}{2K},
	\end{align*}
	where 
	$ D=\diam(\mathcal{P}) $, $ R=\rho(\mathcal{P}) $,
	  and $G=\sup_{1\le t\leq T,{\bf x}\in\mathcal{P}}\lVert\nabla 
	  f_{t}({\bf x})\rVert$ are assumed to be finite.
\end{thm}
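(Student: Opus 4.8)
The plan is to lift the offline Frank-Wolfe analysis of \citet{bian16guaranteed} to the online setting, where the exact linear-maximization oracle used at each of the $K$ inner iterations is replaced by the no-regret guarantee of the meta-experts $\mathcal{E}^k$. I would work with the inner iterates $\mathbf{x}_t(k)$ for $k=0,\dots,K$, recording three structural facts from the algorithm: $\mathbf{x}_t(0)=0$, $\mathbf{x}_t(K)=\mathbf{x}_t$, and consecutive iterates differ by $\mathbf{x}_t(k+1)-\mathbf{x}_t(k)=\frac{1}{K}\mathbf{v}^k_t$ with $\mathbf{v}^k_t\in\mathcal{P}$, so that $\lVert\mathbf{v}^k_t\rVert\le R$.

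First, for each pair $(t,k)$ I would invoke $\beta$-smoothness in its quadratic lower-bound form, $f_t(\mathbf{x}_t(k+1)) \ge f_t(\mathbf{x}_t(k)) + \frac{1}{K}\langle\nabla f_t(\mathbf{x}_t(k)),\mathbf{v}^k_t\rangle - \frac{\beta R^2}{2K^2}$. The second ingredient is the key DR-submodular inequality, namely that for monotone DR-submodular $f_t$ and the hindsight optimum $\mathbf{x}^*$ one has $\langle \mathbf{x}^*,\nabla f_t(\mathbf{x}_t(k))\rangle \ge f_t(\mathbf{x}^*)-f_t(\mathbf{x}_t(k))$. I would derive this by chaining three observations: the gradient is nonnegative by monotonicity, so replacing $\mathbf{x}^*$ by the smaller vector $\mathbf{x}^*\vee\mathbf{x}_t(k)-\mathbf{x}_t(k)\le\mathbf{x}^*$ only decreases the inner product; concavity of $f_t$ along the nonnegative direction $\mathbf{x}^*\vee\mathbf{x}_t(k)-\mathbf{x}_t(k)$ gives $f_t(\mathbf{x}^*\vee\mathbf{x}_t(k))-f_t(\mathbf{x}_t(k))\le\langle\mathbf{x}^*\vee\mathbf{x}_t(k)-\mathbf{x}_t(k),\nabla f_t(\mathbf{x}_t(k))\rangle$; and monotonicity gives $f_t(\mathbf{x}^*\vee\mathbf{x}_t(k))\ge f_t(\mathbf{x}^*)$.

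The online twist enters through the experts. Since the payoff fed to $\mathcal{E}^k$ is the linear reward $\langle\mathbf{v},\nabla f_t(\mathbf{x}_t(k))\rangle$, whose gradient in $\mathbf{v}$ has norm at most $G$ over a set of diameter $D$, the RFTL regret bound yields $\sum_t \langle \mathbf{v}^k_t,\nabla f_t(\mathbf{x}_t(k))\rangle \ge \sum_t\langle\mathbf{x}^*,\nabla f_t(\mathbf{x}_t(k))\rangle - 2DG\sqrt{T}$, where I compare against the fixed competitor $\mathbf{x}^*\in\mathcal{P}$. Summing the smoothness inequality over $t$, substituting the DR inequality and then the expert regret bound, and abbreviating $h_k \triangleq \sum_t f_t(\mathbf{x}^*)-\sum_t f_t(\mathbf{x}_t(k))$ and $\mathrm{Reg}_k$ for the regret of $\mathcal{E}^k$, I would arrive at the recursion $h_{k+1} \le (1-\tfrac{1}{K})h_k + \frac{1}{K}\mathrm{Reg}_k + \frac{\beta R^2 T}{2K^2}$, with $\mathrm{Reg}_k \le 2DG\sqrt{T}$.

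Finally I would unroll this recursion from $k=0$ to $K$. Applying $(1-1/K)^K\le e^{-1}$ to the $h_0$ term and bounding every discount factor $(1-1/K)^{K-1-k}$ by $1$, the accumulated expert regret contributes at most $\frac{1}{K}\sum_{k}\mathrm{Reg}_k \le 2DG\sqrt{T}$, while the smoothness error contributes at most $\frac{\beta R^2 T}{2K}$ via the geometric sum bounded by $K$. Identifying $h_0 = \sum_t f_t(\mathbf{x}^*)-\sum_t f_t(0)$ (since $\mathbf{x}_t(0)=0$) and $h_K = \sum_t f_t(\mathbf{x}^*)-\sum_t f_t(\mathbf{x}_t)$, then rearranging, gives exactly the claimed inequality. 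I expect the main obstacle to be conceptual rather than computational: one must recognize that the per-iteration linear oracle of offline Frank-Wolfe can be emulated by $K$ parallel no-regret learners, and then verify that their regrets aggregate benignly. The discount factors $(1-1/K)^{K-1-k}\le 1$ are precisely what prevent the $K$ expert regrets from compounding, so that the total penalty remains $O(\sqrt{T})$ and does not grow with $K$.
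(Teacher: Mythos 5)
Your proposal is correct and follows essentially the same route as the paper's proof: the same smoothness-based per-step bound, the same key inequality $\langle \mathbf{x}^*,\nabla f_t(\mathbf{x}_t(k))\rangle \ge f_t(\mathbf{x}^*)-f_t(\mathbf{x}_t(k))$ obtained via the vector $(\mathbf{x}^*-\mathbf{x}_t(k))\vee 0$ and concavity along non-negative directions, and the same recursion unrolled with $(1-1/K)^K\le e^{-1}$. The only cosmetic difference is that you compare the experts' regret directly against $\mathbf{x}^*$, whereas the paper first compares against the best fixed linear maximizer $\mathbf{v}^{k*}$ and then lower-bounds by $\mathbf{x}^*$; both are valid and lead to the identical bound.
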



If we assume that the functions $f_t$ are non-negative, then we have $f_t(0)\geq 0$ for all $t$, which implies that 
the first term $-e^{-1}\sum_{t=1}^{T}f_{t}(0)$ in the regret bound of \cref{thm:guarantee-frank-wolfe} is non-positive (thus reduces the entire sum). The second term is $O(\sqrt{T})$. Finally, If we let the number of RFTL algorithm instances $K$ be equal to $\sqrt{T}$, the final term $\frac{\beta R^2 T}{2K}$ will become $\frac{\beta R^2 }{2}\sqrt{T}$.  


\subsection{$1/2$ Guarantee via \OGA}\label{sub:grad-ascent}


We saw that when the gradient can be efficiently evaluated, \OFW presented in
\cref{alg:Online-Continuous-Submodular}
yields a sublinear regret bound. However, efficient evaluation of the gradient 
could be impossible in many scenarios. 
For example, exact evaluation of the gradients of the multilinear extension  of a submodular set function  requires summation over exponentially many terms. 
Furthermore, one may consider a class of stochastic continuous 
DR-submodular functions $f(\x)=\mathbb{E}_{\theta\sim 
\mathcal{D}}[f_{\theta}(\x)]$, where every $f_{\theta}$ is continuous 
DR-submodular and the parameter $\theta$ is sampled from a 
(potentially unknown) distribution $\mathcal{D}$ 
\citep{hassani2017gradient,karimi2017stochastic}. Again, in such cases 
it is generally intractable to compute the gradient of $f(\x)$, 
namely, $\nabla f(\x)=\mathbb{E}_{\theta\sim \mathcal{D}}[\nabla 
f_{\theta}(\x)]$\footnote{This equation holds if some regularity 
conditions are satisfied, in light of Lebesgue's dominated 
convergence theorem.}. Instead, the stochastic terms $\nabla 
f_{\theta}(\x)$ provide unbiased estimates for the gradients. 
Another disadvantage of the \OFW algorithm is that it requires $ 
O(\sqrt{T}) $ gradient queries for each function $ f_t $, which may 
be even more prohibitive.
In this 
subsection, we show how we can use \OGA to design an algorithm with 
sublinear regret and robust to stochastic gradients when the 
functions $f_t$ are monotone and 
continuous DR-submodular. 


First, it was shown by \citet{hassani2017gradient} that a direct usage of  
unbiased estimates of the gradients in  Frank-Wolfe-type algorithms can lead to 
arbitrarily bad solutions in the context of stochastic submodular 
maximization. This happens due to the non-vanishing variance of gradient 
approximations. As a  result, new techniques should be developed for the online 
optimization algorithm with access to  unbiased estimates of the gradients of 
$f_t$ (instead of the exact gradients). 
To handle the stochastic noise in the gradient, we consider the (stochastic) gradient ascent method. In \cref{thm:gradient-ascent}, we show  that  
the $(\frac{\gamma^2}{\gamma^2+1})$-regret of 
(stochastic) \OGA
is bounded by $O(\sqrt{T})$
for $ \gamma $-weakly DR-submodular functions. 
In particular, for the special case of $ \gamma=1 $, 
the $1/2$-regret of \OGA is bounded by $O(\sqrt{T})$
for continuous DR-submodular functions. 
The precise description of \OGA is presented in \cref{alg:Online-Gradient} while its stochastic version is presented in \cref{alg:Online-Stoch-Gradient}.

\begin{algorithm}[htb]
\begin{algorithmic}[1] 
\Require{convex set $\mathcal{P}$, $T$, $\mathbf{x}_1\in \mathcal{P}$, step sizes $\{ \eta_t \}$} 
\Ensure{$\{\mathbf{x}_t:1\leq t\leq T \}$} 
\For{$t\gets 1,2,3,\ldots, T$}
\State{Play $\mathbf{x}_t$ and receive reward $f_t(\mathbf{x}_t)$.}
\State{$\mathbf{x}_{t+1}=\Pi_{\mathcal{P}}(\mathbf{x}_t+\eta_t \nabla f_t(\mathbf{x}_t) )$}
\EndFor
\end{algorithmic}\caption{\textsf{Online Gradient 
Ascent}\label{alg:Online-Gradient}}
\end{algorithm}
\begin{algorithm}[htb]
	\begin{algorithmic}[1] 
		\Require{convex set $\mathcal{P}$, $T$, $\mathbf{x}_1\in \mathcal{P}$, step sizes $\{ \eta_t \}$} 
		\Ensure{$\{\mathbf{x}_t:1\leq t\leq T \}$} 
		\For{$t\gets 1,2,3,\ldots, T$}
		\State{Play $\mathbf{x}_t$ and receive reward $f_t(\mathbf{x}_t)$.}
		\State{Observe $\mathbf{g}_t$ such that $\mathbb{E}[\mathbf{g}_t|\mathbf{x}_t]=\nabla f_t(\mathbf{x}_t)$}
		\State{$\mathbf{x}_{t+1}=\Pi_{\mathcal{P}}(\mathbf{x}_t+\eta_t \mathbf{g}_t )$}
		\EndFor
	\end{algorithmic}\caption{\textsf{Online Stochastic Gradient 
	Ascent}\label{alg:Online-Stoch-Gradient}}
\end{algorithm}

\begin{thm}\label{thm:gradient-ascent}
\textbf{\emph{(Proof in~\cref{sec:proof-gradient}})}
Assume that the functions $f_t:\mathcal{X}\to\mathbb{R}_{+}$ are monotone and weakly DR-submodular
 with parameter $\gamma$ for $ t= 1,2,3,\dots, T $. Let $ \{ 
 \mathbf{x}_t: 1\leq t\leq T \} $ be the choices of 
 \cref{alg:Online-Gradient} (\cref{alg:Online-Stoch-Gradient}, 
 respectively) and let $ \eta_t = 
 \frac{D}{G\sqrt{t}} $, then we have
\[
\frac{\gamma^{2}}{\gamma^{2}+1}\sum_{t=1}^{T}f_{t}({\bf x}^{*})-\sum_{t=1}^{T}f_{t}({\bf x}_{t})\leq\frac{3\gamma DG\sqrt{T}}{2(\gamma^{2}+1)}
\]
and
\[
\frac{\gamma^{2}}{\gamma^{2}+1}\sum_{t=1}^{T}f_{t}({\bf x}^{*})-\sum_{t=1}^{T}\mathbb{E}\left[f_{t}({\bf x}_{t})\right]\leq\frac{3\gamma DG\sqrt{T}}{2(\gamma^{2}+1)}.
\]
for \cref{alg:Online-Gradient} and \cref{alg:Online-Stoch-Gradient}, respectively, where 
$ D=\diam(\mathcal{P}) $
 and $ G = \sup_{ 1\leq t\leq T, \mathbf{x}\in \mathcal{P}} \lVert 
 \nabla f_t(\mathbf{x}) \rVert $ (for 
 \cref{alg:Online-Stoch-Gradient}, $ G = \sup_{ 1\leq t\leq T
 } \lVert 
  \mathbf{g}_t \rVert $) are assumed to be finite. In 
 particular, when $ f_t $ is continuous DR-submodular ($ \gamma=1 $), 
 we have \[ \frac{1}{2}\sum_{t=1}^{T}f_{t}({\bf 
 x}^{*})-\sum_{t=1}^{T}f_{t}({\bf x}_{t})\leq\frac{3}{4} DG\sqrt{T} \]
and
\[ \frac{1}{2}\sum_{t=1}^{T}f_{t}({\bf x}^{*})-\sum_{t=1}^{T} \mathbb{E}[f_{t}({\bf x}_{t})]\leq\frac{3}{4} DG\sqrt{T}, \]
respectively.
\end{thm}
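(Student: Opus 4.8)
The plan is to decouple the argument into two independent pieces: a purely submodular inequality that relates the linear surrogate $\langle \nabla f_t(\mathbf{x}_t), \mathbf{x}^* - \mathbf{x}_t\rangle$ to the function values $f_t(\mathbf{x}^*)$ and $f_t(\mathbf{x}_t)$, and the classical regret analysis of projected (stochastic) gradient ascent applied to the resulting sequence of linear objectives. The bridge between them is the single-step key inequality
\[
\langle \nabla f_t(\mathbf{x}_t), \mathbf{x}^* - \mathbf{x}_t\rangle \geq \gamma f_t(\mathbf{x}^*) - \frac{\gamma^2+1}{\gamma} f_t(\mathbf{x}_t),
\]
which, once multiplied by $\frac{\gamma}{\gamma^2+1}$, turns any upper bound on $\sum_t \langle \nabla f_t(\mathbf{x}_t), \mathbf{x}^*-\mathbf{x}_t\rangle$ directly into the target bound on $\frac{\gamma^2}{\gamma^2+1}\sum_t f_t(\mathbf{x}^*) - \sum_t f_t(\mathbf{x}_t)$.

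Establishing this key inequality is the heart of the proof and I expect it to be the main obstacle. First I would prove a ray estimate: for $\mathbf{x}\leq \mathbf{y}$ in $\mathcal{X}$, writing $f_t(\mathbf{y}) - f_t(\mathbf{x}) = \int_0^1 \langle \nabla f_t(\mathbf{x} + s(\mathbf{y}-\mathbf{x})), \mathbf{y}-\mathbf{x}\rangle \, ds$ and using that weak DR-submodularity gives $\nabla f_t(\mathbf{x}+s(\mathbf{y}-\mathbf{x})) \leq \frac{1}{\gamma}\nabla f_t(\mathbf{x})$ together with $\mathbf{y}-\mathbf{x}\geq 0$ yields $\langle \nabla f_t(\mathbf{x}), \mathbf{y}-\mathbf{x}\rangle \geq \gamma (f_t(\mathbf{y}) - f_t(\mathbf{x}))$. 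Applying this with $\mathbf{y} = \mathbf{x}\vee \mathbf{x}^*$, invoking monotonicity so that $f_t(\mathbf{x}\vee\mathbf{x}^*)\geq f_t(\mathbf{x}^*)$, and using nonnegativity of the gradient with the coordinatewise bound $(\mathbf{x}\vee\mathbf{x}^*)-\mathbf{x} = (\mathbf{x}^*-\mathbf{x})^+ \leq \mathbf{x}^*$, gives $\langle \nabla f_t(\mathbf{x}), \mathbf{x}^*\rangle \geq \gamma(f_t(\mathbf{x}^*) - f_t(\mathbf{x}))$. Separately, the same ray estimate along the segment from $\mathbf{0}$ to $\mathbf{x}$ gives $\langle \nabla f_t(\mathbf{x}), \mathbf{x}\rangle \leq \frac{1}{\gamma}(f_t(\mathbf{x}) - f_t(\mathbf{0})) \leq \frac{1}{\gamma}f_t(\mathbf{x})$ by nonnegativity of $f_t$. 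Subtracting the two estimates produces exactly the key inequality.

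With the key inequality in hand, the regret analysis is the textbook \OGA argument. By nonexpansiveness of $\Pi_{\mathcal{P}}$ I would expand $\|\mathbf{x}_{t+1} - \mathbf{x}^*\|^2 \leq \|\mathbf{x}_t + \eta_t \nabla f_t(\mathbf{x}_t) - \mathbf{x}^*\|^2$ and rearrange to
\[
\langle \nabla f_t(\mathbf{x}_t), \mathbf{x}^* - \mathbf{x}_t\rangle \leq \frac{\|\mathbf{x}_t - \mathbf{x}^*\|^2 - \|\mathbf{x}_{t+1}-\mathbf{x}^*\|^2}{2\eta_t} + \frac{\eta_t}{2}\|\nabla f_t(\mathbf{x}_t)\|^2 .
\]
Summing over $t$, bounding $\|\nabla f_t(\mathbf{x}_t)\|\leq G$ and $\|\mathbf{x}_t - \mathbf{x}^*\|\leq D$, and telescoping by Abel summation with the decreasing schedule $\eta_t = \frac{D}{G\sqrt{t}}$ gives $\sum_t \frac{\|\mathbf{x}_t-\mathbf{x}^*\|^2 - \|\mathbf{x}_{t+1}-\mathbf{x}^*\|^2}{2\eta_t}\leq \frac{D^2}{2\eta_T} = \frac{DG\sqrt{T}}{2}$, while $\frac{G^2}{2}\sum_t \eta_t \leq DG\sqrt{T}$ via $\sum_{t=1}^T t^{-1/2}\leq 2\sqrt{T}$, for a combined bound of $\frac{3DG\sqrt{T}}{2}$ on $\sum_t \langle \nabla f_t(\mathbf{x}_t), \mathbf{x}^*-\mathbf{x}_t\rangle$. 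Scaling through the key inequality by $\frac{\gamma}{\gamma^2+1}$ delivers the claimed $\frac{3\gamma DG\sqrt{T}}{2(\gamma^2+1)}$.

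For the stochastic variant the only change is that the update uses $\mathbf{g}_t$ with $\mathbb{E}[\mathbf{g}_t\mid \mathbf{x}_t] = \nabla f_t(\mathbf{x}_t)$. Since $\mathbf{x}_t$ and $\mathbf{x}^*$ are determined before $\mathbf{g}_t$ is drawn, taking the conditional expectation converts $\langle \mathbf{g}_t, \mathbf{x}^* - \mathbf{x}_t\rangle$ into $\langle \nabla f_t(\mathbf{x}_t), \mathbf{x}^*-\mathbf{x}_t\rangle$, so the key inequality applies in expectation; carrying the total expectation through the same telescoping argument (now with $G$ bounding $\|\mathbf{g}_t\|$) gives the identical bound on $\frac{\gamma^2}{\gamma^2+1}\sum_t f_t(\mathbf{x}^*) - \sum_t \mathbb{E}[f_t(\mathbf{x}_t)]$, and the specialization $\gamma=1$ is immediate. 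I anticipate the submodular key inequality, and in particular the correct handling of the lattice term $(\mathbf{x}^*-\mathbf{x})^+$ and of the two ray integrals with the weak parameter $\gamma$, to be the only genuinely nontrivial step; everything else is standard online-gradient-descent machinery.
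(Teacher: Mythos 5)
Your proposal is correct and takes essentially the same route as the paper: your ``key inequality'' is exactly \cref{lem:key-lemma} (multiplied through by $\gamma$), and the rest --- nonexpansiveness of $\Pi_{\mathcal{P}}$, Abel/telescoping with $\eta_t = D/(G\sqrt{t})$ giving $\tfrac{3}{2}DG\sqrt{T}$, and the conditional-expectation step for the stochastic variant --- mirrors the paper's argument step for step. The only difference is that you prove the key inequality via the two ray estimates (gradient at the lower endpoint toward $\mathbf{x}\vee\mathbf{x}^*$, gradient at the upper endpoint along $[\mathbf{0},\mathbf{x}]$), whereas the paper imports it as a black box from the proof of Theorem~4.2 in \citep{hassani2017gradient}; your derivation is in fact the standard proof of that cited result, so this is a matter of self-containedness rather than a different approach.
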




\section{Experiments}\label{sec:experiments}


In the experiments, we compare the performance of the following algorithms:
\begin{itemize}
\item \OFW. 
We choose $ r(\mathbf{x}) = \lVert \mathbf{x}-\mathbf{x}_0\rVert ^2/2 $ as the 
regularizer of the RFTL in \OFW. RFTL has a parameter $\eta$ that balances the 
sum of inner products with the gradients of each step and the 
regularizer~\citep{hazan2016introduction}.
\item \OGA. We also denote the step size (also known as the learning rate) of the online gradient ascent by $ \eta $. Therefore \OGA also has a parameter $ \eta $.
\item \Rand. For each objective function $ f_t $, \Rand samples 100 points in 
the constraint set and selects the one that maximizes $ f_t $. We would like to 
emphasize that \Rand is \emph{infeasible} in the online setting since online 
algorithms have to make decisions before an objective function is revealed.
\item \SGA. When the objective functions are the multilinear extension of 
submodular \emph{coverage} functions, we also studied the performance of 
gradient ascent applied to a surrogate function, which is shown to be a concave 
upper bound for the multilinear extension~\citep{karimi2017stochastic}. 
\end{itemize}
\begin{figure*}[t]
	\centering
	\begin{subfigure}{0.32\textwidth}
		\includegraphics[width=\textwidth]{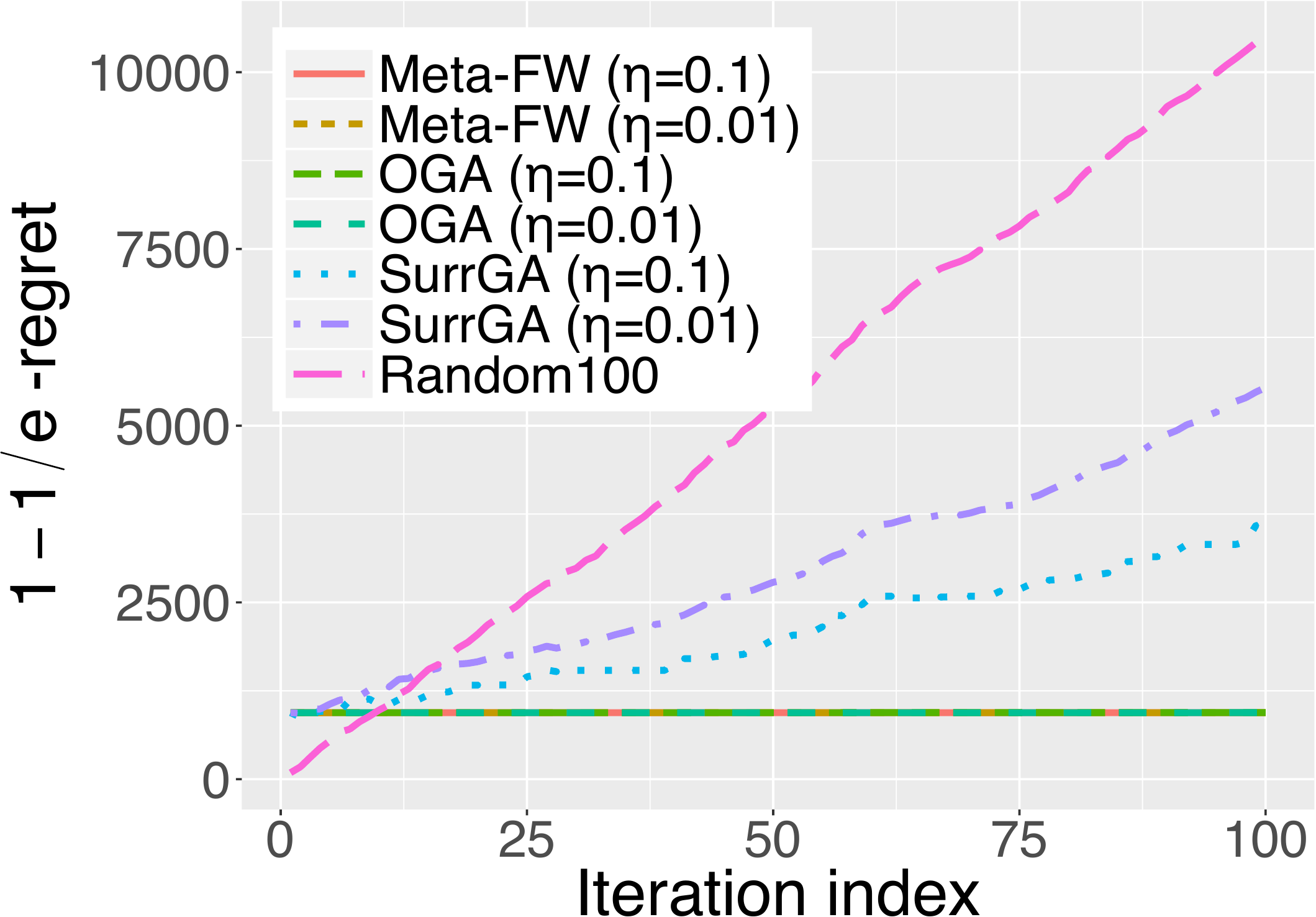}
		\caption{\label{fig:coverage_regret_T}}
	\end{subfigure}
	\begin{subfigure}{0.32\textwidth}
		\includegraphics[width=\textwidth]{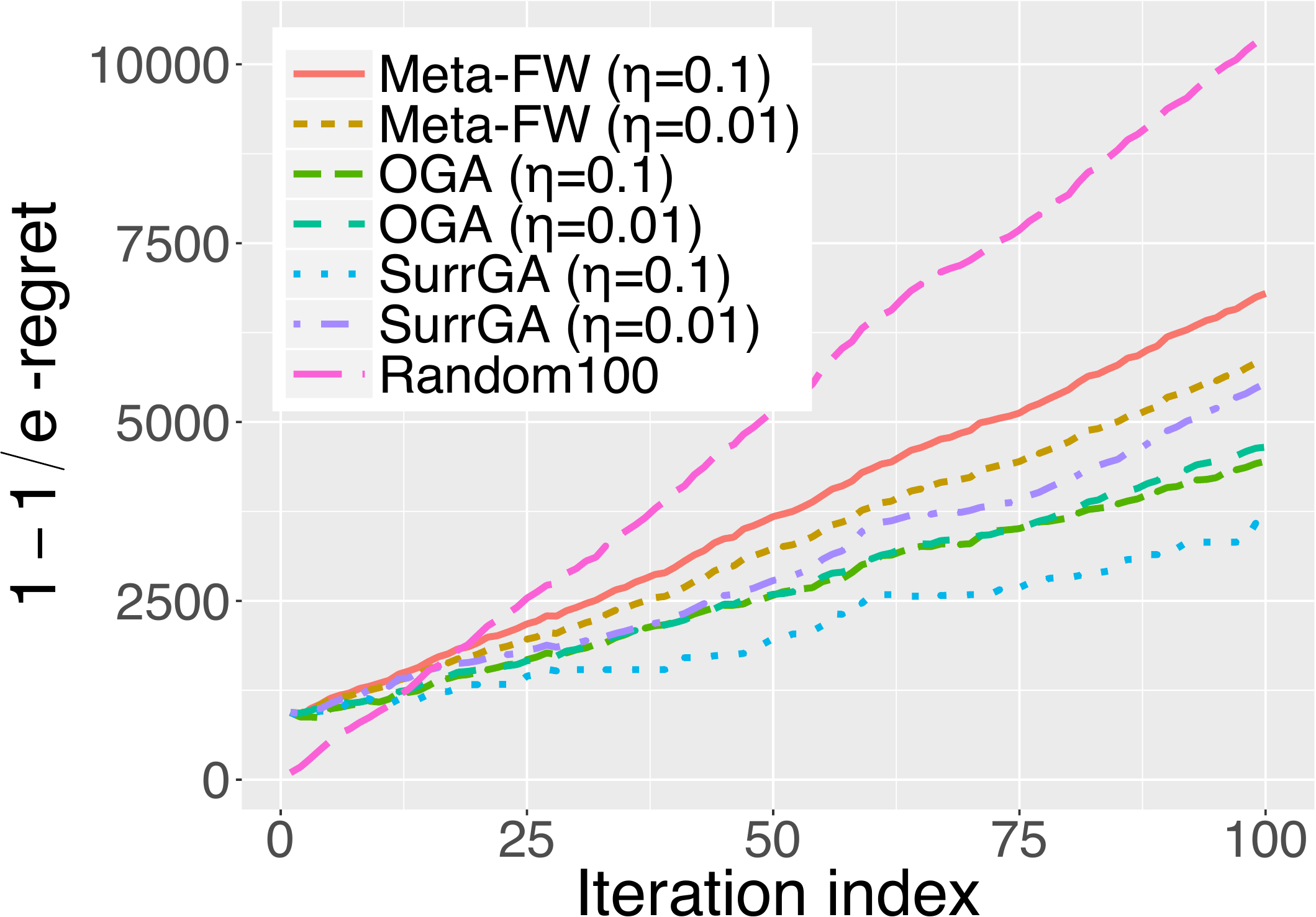}
		\caption{\label{fig:coverage_regret_stoch_T}}
	\end{subfigure}
	\begin{subfigure}{0.32\textwidth}
		\includegraphics[width=\textwidth]{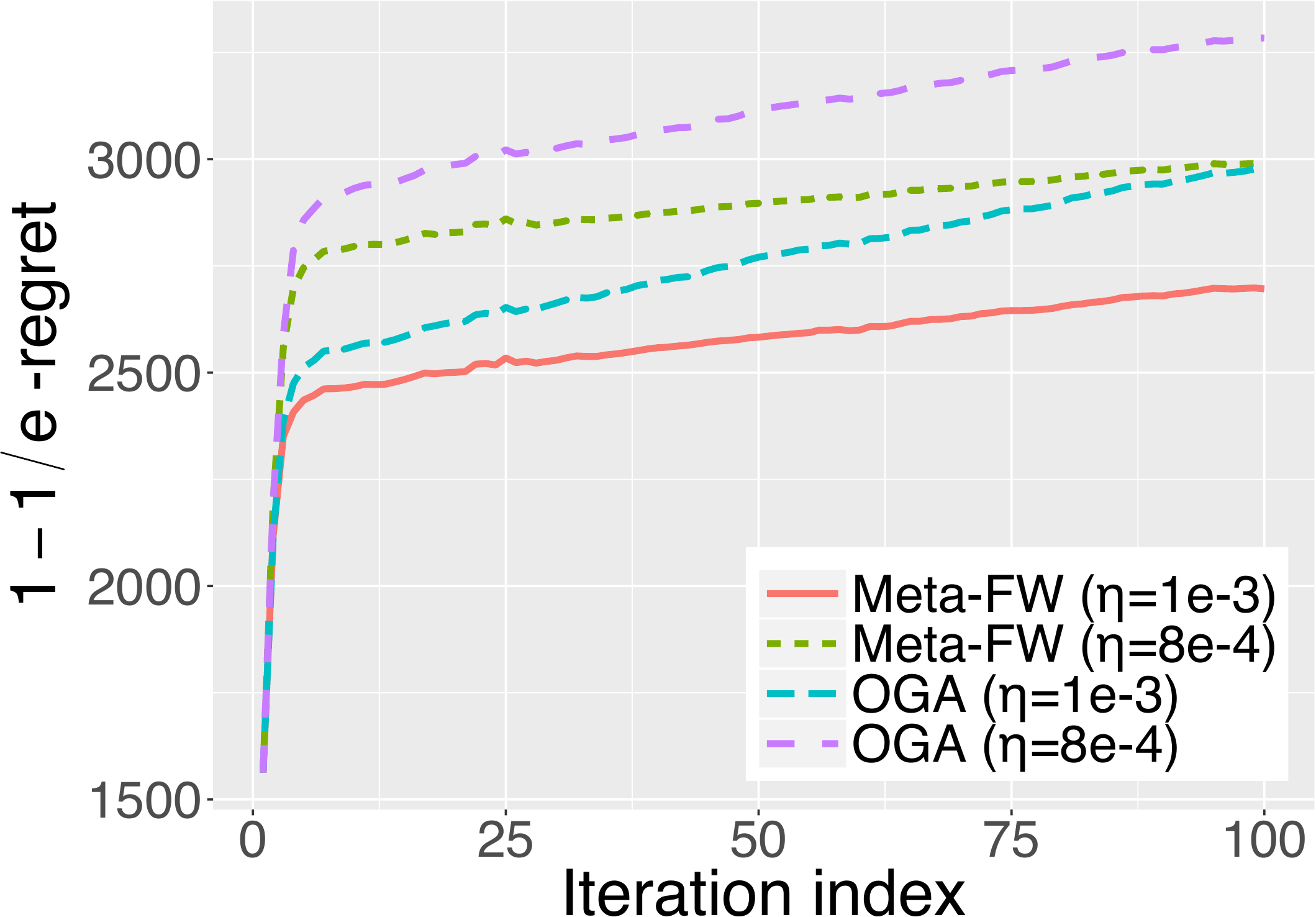}
		\caption{\label{fig:nqp_T}}
	\end{subfigure}
%
	
	\begin{subfigure}{0.32\textwidth}
		\includegraphics[width=\textwidth]{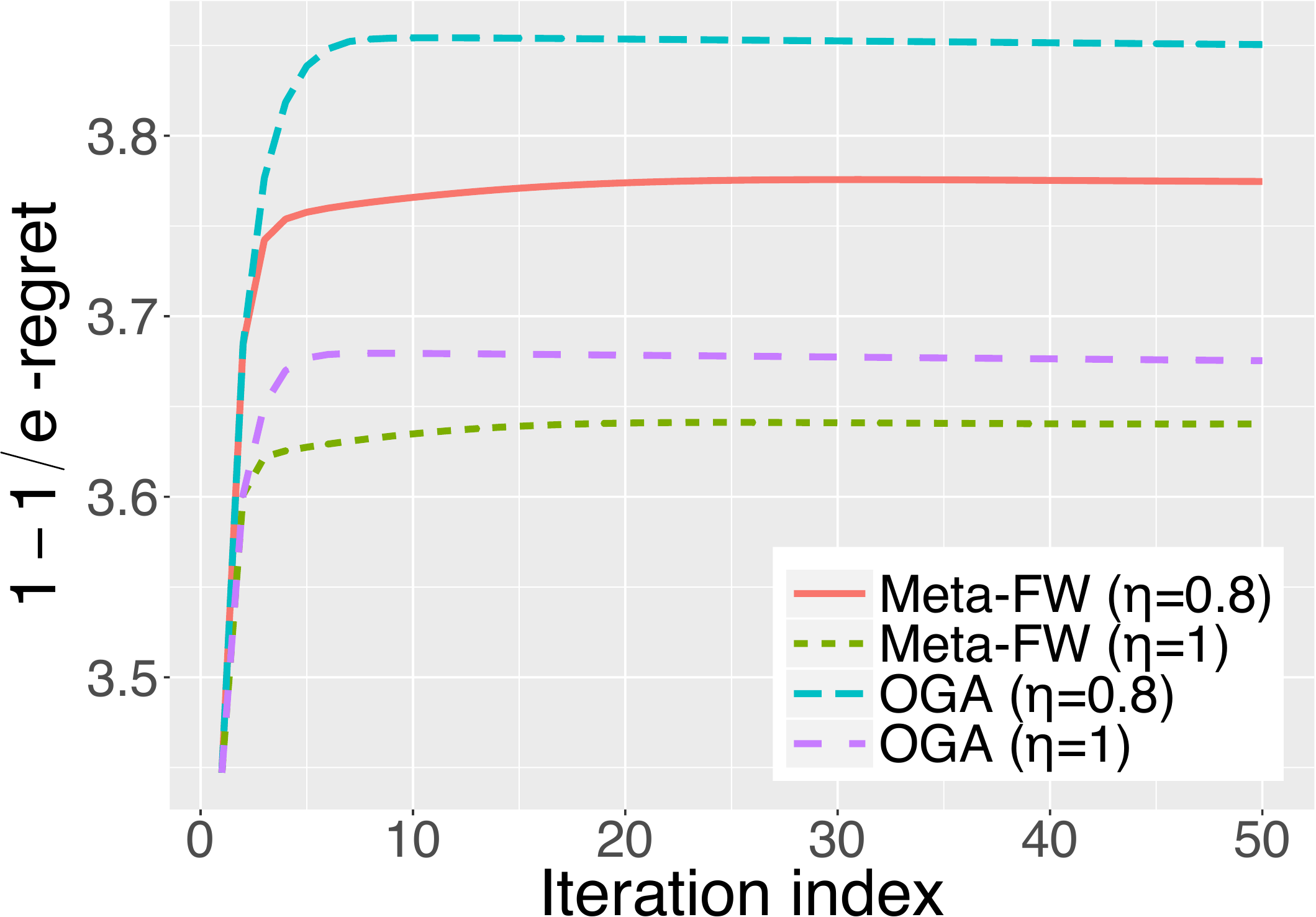}
		\caption{\label{fig:d-optimal_T}}
	\end{subfigure}
	\begin{subfigure}{0.32\textwidth}
		\includegraphics[width=\textwidth]{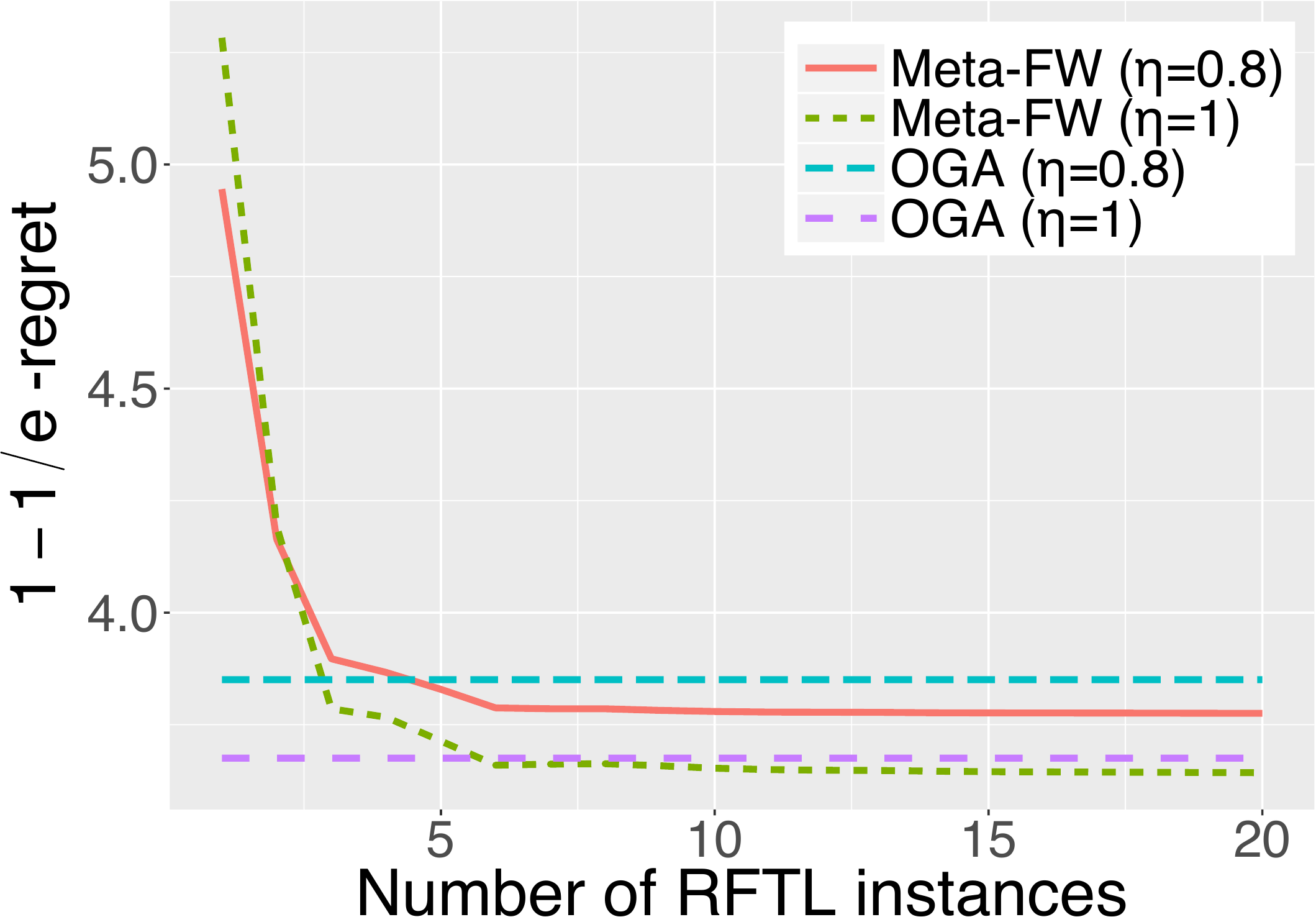}
		\caption{\label{fig:d-optimal_K}}
	\end{subfigure}
	\caption{In the legends of all subfigures, we write \textsf{Meta-FW} for 
	\OFW, \textsf{OGA} for \OGA, and \textsf{SurrGA} for \SGA.
	The results for the multilinear extension are 
	presented in \cref{fig:coverage_regret_T,fig:coverage_regret_stoch_T}. We 
	present the $ (1-1/e) $-regret versus the number of iterations in 
	\cref{fig:coverage_regret_T}. In \cref{fig:coverage_regret_stoch_T} we 
	illustrate  the result for the setting in which only an unbiased estimate 
	of the gradient is available.
		\cref{fig:nqp_T} shows how the $ (1-1/e) $-regret evolves for the 
		non-convex/non-concave quadratic programming.
		\cref{fig:d-optimal_T,fig:d-optimal_K} are about the D-optimal experiment design problem. \cref{fig:d-optimal_T} shows the $ (1-1/e) $-regret versus the number of iterations, while \cref{fig:d-optimal_K} shows how the number of RFTL instances $ K $ influences the performance of \OFW.}
\end{figure*}

\subsection{Multilinear Extension}
As our first experiment, we consider a sequence of multilinear extensions of 
weighted coverage functions (see \cref{sec:submodularity}). Recall that such 
functions have a concave lower bound. Thus, we introduce another baseline \SGA 
that uses supergradient ascent to maximize the concave lower bound  function $ 
(1-1/e)\bar{f}(\mathbf{x}) $.
The result is presented in \cref{fig:coverage_regret_T}. We observe that \Rand has the highest regret and both \OFW and \OGA, whose performance is slightly inferior to that of \OFW, outperform \SGA.

Then, we study the case where only an unbiased estimate of the gradient is available. For any $ \mathbf{x}\in [0,1]^n $, let $$ [ \tilde{ \nabla} f(\mathbf{x})]_i \triangleq f(R_i \cup\{i\} ) -f(R_i) ,$$
where $ R_i $ is a random subset of $ [n] \setminus\{i\} $ such that each $ 
j\neq i $ is in $ R_i $ with probability $ x_j $ independently. Then  we have $ 
\mathbb{E}[ \tilde{\nabla} f(\mathbf{x})] = \nabla f(\mathbf{x}) 
$~\citep{calinescu2011maximizing}.
The result in this setting is presented in~\cref{fig:coverage_regret_stoch_T}. 
Notice that in \cref{fig:coverage_regret_stoch_T} the regret of \Rand and \SGA is uninfluenced by the stochastic gradient oracle since they do not rely on the exact gradient of the original objective function. \OFW and \OGA both incur higher regret in \cref{fig:coverage_regret_stoch_T} than in \cref{fig:coverage_regret_T}. In addition, the stochastic gradient oracle has more impact upon \OFW than \OGA. This agrees with our theoretical guarantee for \OGA and a result from
\citep{hassani2017gradient}, which states that Frank-Wolfe-type algorithms are 
not robust to stochastic noise in the gradient oracle.

\subsection{Non-Convex/Non-Concave Quadratic Programming}

Quadratic programming problems have objective functions of the form $ f(\mathbf{x}) = \frac{1}{2}\x^{\top} \mathbf{H} \x + \mathbf{h}^{\top} \x + c $ and linear equality and/or inequality constraints. If the matrix $ \mathbf{H} $ is indefinite, the objective function becomes non-convex and non-concave. We constructed $ m $ linear inequality constraints $ \mathbf{A} \x \leq \mathbf{b} $, where each entry of $ \mathbf{A}\in \mathbb{R}^{m\times n} $ is sampled uniformly at random from $ [0,1] $. We set $ m=2 $. In addition, we require that the variable $ \x $ reside in a positive cuboid. Formally, the constraint is a positive polytope $ \mathcal{P}=\{ \x\in \mathbb{R}^n: \mathbf{A}\x \leq \mathbf{b}, 0\leq \x \leq \mathbf{u} \} $. We set $ \mathbf{b} = \mathbf{u} = \mathbf{1} $. To ensure that the gradient is non-negative, we set $ \mathbf{h} = -\mathbf{H}^{\top}\mathbf{u} $. Without loss of generality, we assume that the constant term $ c $ is $ 0 $. Thus the function is $ f(\mathbf{x}; \mathbf{H}) = (\frac{1}{2}\mathbf{x}-\mathbf{u})^{\top} \mathbf{H}\mathbf{x} $; it is fully determined by the matrix $ \mathbf{H} $. In our online optimization setting, we assume that the $ T $ functions $ f_1, f_2, \ldots, f_T $ are associated with matrices $ \mathbf{H}_1, \mathbf{H}_2, \ldots, \mathbf{H}_T $. For every $ \mathbf{H}_i $, its entries are sampled uniformly at random from $ [-100,0] $.
 We set $ K=50 $. The result is illustrated in
\cref{fig:nqp_T}. It can be observed that with the same step size $ \eta $, the 
regret of \OFW is smaller than \OGA.

\subsection{D-Optimal Experimental Design}

The objective function of the D-optimal design problem is 
	$
	f(\bm{\lambda})
	=\log\det\left(\sum_{i=1}^{N}\lambda_{i}{\bf x}_{i}{\bf 
	x}_{i}^{\top}\right).
$ 
 We
write $A(\bm{\lambda})$ for $\sum_{i=1}^{N}\lambda_{i}{\bf x}_{i}{\bf x}_{i}^{\top}$
for the ease of notation.
It is DR-submodular because for any $ i $ and $ j $
\[
\frac{\partial^{2}f(\bm{\lambda})}{\partial\lambda_{j}\partial\lambda_{i}}=-({\bf x}_{j}^{\top}A(\bm{\lambda})^{-1}{\bf x}_{i})^{2}\leq0.
\]
For every $ \mathbf{x}_i $, its entries are sampled from the standard normal distribution independently.
We try to solve the maximization in the polytope $ \mathcal{P} = \{ \bm{\lambda}: \mathbf{A}(\bm{\lambda}-\bm{1}) \leq \bm{1}, \bm{1}\leq \bm{\lambda}\leq \bm{2}  \} $. Each entry of $ \mathbf{A} $ is sampled uniformly from $ [0,1] $ and the number of inequality constraints is set to $ 2 $. The polytope is shifted to avoid $ \bm{0} $ since the function is undefined at $ \bm{\lambda}=\bm{0} $.
 In \cref{fig:d-optimal_T}, we illustrate how the function value attained by the algorithms varies as it experiences more iterations; $ K $ is fixed to be $ 50 $ in this set of experiments. 
 We observe that \OFW outperforms all other baselines. In addition, \OFW 
 achieves better performance when the step size $ \eta=1 $.

 In the second set of experiments, we show the function values attained by the 
 algorithms at the end of the $ 50 $th iteration, with $ K $ ranging from $ 1 $ 
 to $ 20 $ for \OFW. 
 Recall that $ K $ is the number of Frank-Wolfe steps in \OFW.
 The result is presented in \cref{fig:d-optimal_K}. 
 Since  $ K $ is not a parameter of  \OGA, the regret of \OGA remains constant as $ K $ varies. The regret of \OFW is reduced as $ K $ increases.  This agrees with our intuition that more Frank-Wolfe steps yield better performance.

%
%
%
%

%
%
%
%
\section{Related Work}
\paragraph{Submodular functions.} Submodularity is a structural property that 
is often associated with set functions \citep{nemhauser1978analysis, 
fujishige2005submodular}. It has found far-reaching applications in statistics 
and artificial intelligence, including active learning 
\citep{golovin2011adaptive}, viral marketing 
\citep{kempe2003maximizing,gomez2012influence,Zhang2016Influence}, network 
monitoring 
\citep{leskovec2007cost,gomez2010inferring}, document and corpus summarization 
\citep{lin2011class,kirchhoff2014submodularity,sipos2012temporal}, crowd 
teaching \citep{singla2014near}, feature selection 
\citep{elenberg2016restricted}, and interpreting deep neural networks 
\citep{elenberg2017streaming}. However, submodularity  goes  beyond
set functions and can be extended to continuous domains 
\citep{wolsey1982analysis,topkis1978minimizing}. 
Maximizing a submodular set function is inherently related to its continuous 
relaxation through the multilinear extension \citep{calinescu2011maximizing}, 
which is an example of the DR-submodular function. A variant of the Frank-Wolfe 
algorithm, called continuous 
greedy~\citep{calinescu2011maximizing,vondrak2008optimal},
can be used to
maximize, within a $(1-1/e)$ approximation to the optimum, the multilinear 
extension of a submodular set function~\citep{calinescu2011maximizing}
or more generally a monotone smooth submodular function subject to a 
polytope~\citep{chekuri2015multiplicative}.
It is also known that finding  a better approximation guarantee is impossible 
under reasonable complexity-theoretic 
assumptions~\citep{feige1998threshold,vondrak2013symmetry}.
More recently,  \citet{bian16guaranteed} generalized the above results by 
considering the maximization of continuous DR-submodular functions subject to 
down-closed convex bodies and showed that the same continuous greedy method 
achieves a $(1-1/e)$ guarantee.  In a different line of work, 
\citet{hassani2017gradient} studied the applicability of the (stochastic) 
gradient ascent algorithms to the \emph{stochastic} continuous submodular 
maximization setting, where the objective function is defined in terms of an 
expectation. They proved that gradient methods achieve a $1/2$ approximation 
guarantee for monotone DR-submodular functions, subject to a general convex 
body. It is also known that gradient methods cannot achieve a better guarantee 
in general \citep{hassani2017gradient, vondrak2011submodular}. Furthermore, it 
is also shown in \citep{hassani2017gradient} that the continuous greedy 
algorithms are not robust in stochastic settings (where only  unbiased 
estimates of gradients are available) and can provide arbitrarily poor 
solutions, in general (thus motivating the need for stochastic projected 
gradient methods). 
Even though it is not the focus of this paper, 
 we should mention that continuous submodular minimization has also been 
 studied recently~\citep{bach2015submodular, staib2017robust}. 

\paragraph{Online  optimization.} Most of the work in online optimization 
considers  convex (when minimizing the loss)  or concave (when maximizing the 
reward) functions. The protocol of online convex optimization (OCO) was first 
defined  by \citet{zinkevich2003online}. In his influential paper, he proposed 
the online gradient descent method and showed an $O(\sqrt{T})$ regret bound. 
The 
result was later improved to $O(\log(T))$  regret by 
\citet{hazan2007logarithmic} for strongly convex functions. 
\citet{kalai2005efficient} developed  another class of algorithms termed 
Follow-The-Leader (FTL) with the idea of finding a point that minimizes the 
accumulated sum of all objective functions revealed so far. However, there are 
simple situations in which the regret of FTL grows linearly with $T$. To 
circumvent this issue, \citet{kalai2005efficient} introduced random 
perturbation as a
regularization and proposed the follow-the-perturbed-leader algorithm, 
following an early work~\citep{hannan1957approximation}. In addition,
\citet{shalev2007primal} and 
\citet{abernethy2008competing} designed the regularized-follow-the-leader 
(RFTL) algorithm. A comprehensive survey of  OCO can be found in 
\citep{hazan2016introduction,shalev2012online}. Recently, 
\citet{lafond2015online} studied the setting in which the loss functions $ 
\{f_t:1\leq t\leq T \} $ are drawn i.i.d.\ from a fixed distribution and 
proposed the online Frank-Wolfe algorithm. They showed an $ O(\log^3 (T))$ 
regret for strongly convex loss functions. Furthermore, they showed that their 
algorithm finds a stationary point to the stochastic loss at a rate of $ 
O(\sqrt{1/T}) $. 
\citet{garber2013linearly} proposed a conditional gradient algorithm for online 
convex optimization problem over polyhedral sets. Only a single linear 
optimization step is performed in each iteration and this algorithm achieves 
$O(\sqrt{T})$ regret bound for convex losses and $O(\log T)$ regret bound for 
strongly convex losses. 
\citet{luo2014drifting} proposed a general methodology for devising online 
learning algorithms based on a
 drifting-games
analysis.
\citet{hazan2017efficient} goes beyond convexity and considered regret minimization in repeated games with non-convex loss functions. They introduced a new objective termed local regret and proposed online non-convex optimization algorithms that achieve optimal guarantees for this new objective. Our work, in contrast, considers non-convex objective functions that can be approximately maximized. In our notion of $\alpha$-regret, we design two algorithms that can compete with the best fixed offline approximate solution (and not necessarily the stationary points) with tight regret bounds.  

\paragraph{Online submodular optimization.}  
Existing work considered online submodular optimization in a discrete domain. 
\citet{streeter2009online} and \citet{golovin14online} proposed online optimization algorithms for submodular set functions under cardinality and matroid constraints, respectively. 
Our work studies the online submodular optimization in continuous domains. We 
should point out that the online algorithm proposed in \citet{golovin14online} 
relies on the multilinear continuous relaxation, which is simply an instance 
of 
the general class of DR-submodular functions that we consider here.    

\section{Conclusion}
In this paper, we considered an online optimization process, where the 
objective functions were continuous DR-submodular. We proposed two online 
optimization algorithms, \OFW (that has access to exact gradients) and \OGA 
(that 
only has  access to unbiased estimates of the gradients), both with no-regret 
guarantees. We also evaluated the performance of our algorithms in practice. 
Our results make an important contribution in providing performance guarantees 
for a subclass of online non-convex optimization problems. 

\subsubsection*{Acknowledgments}
This work was supported by AFOSR YIP award (FA9550-18-1-0160).


\clearpage
\bibliographystyle{plainnat}
\bibliography{reference-list}
\clearpage
\appendix
\onecolumn
\begin{appendices}
	\crefalias{section}{appsec}
\section{Proof of \cref{thm:guarantee-frank-wolfe}}
\label{sec:proof-frank-wolfe}
Before presenting the proof of \cref{thm:guarantee-frank-wolfe}, we need two 
lemmas first. 
\cref{lem:beta-smooth} shows that a $ \beta $-smooth function can be 
bounded by quadratic functions from above and below. \cref{lem:concave} shows 
the concavity of continuous DR-submodular functions along non-negative and 
non-positive directions.

\begin{lem}
	If $ f $ is $ \beta $-smooth, then we have for any $ \mathbf{x} $ and $ \mathbf{y} $, \[ 
	\left| f(\mathbf{x})-f(\mathbf{y})-\nabla f(\mathbf{x})^{\top} (\mathbf{x}-\mathbf{y})  \right| \leq \frac{\beta}{2} \lVert \mathbf{x}-\mathbf{y} \rVert^2.
	\]
	\[ 
	\left| f(\mathbf{x})-f(\mathbf{y})-\nabla f(\mathbf{y})^{\top} (\mathbf{x}-\mathbf{y})  \right| \leq \frac{\beta}{2} \lVert \mathbf{x}-\mathbf{y} \rVert^2.
	\]
	\label{lem:beta-smooth}
\end{lem}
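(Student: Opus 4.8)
The plan is to reduce both inequalities to a one-dimensional integral along the segment joining $\mathbf{x}$ and $\mathbf{y}$ and then apply the smoothness hypothesis together with the Cauchy--Schwarz inequality. Concretely, I would first establish the second inequality (the one featuring $\nabla f(\mathbf{y})$) and then obtain the first one essentially for free by interchanging the roles of $\mathbf{x}$ and $\mathbf{y}$.

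First I would introduce the auxiliary scalar function $g(\tau) = f(\mathbf{y} + \tau(\mathbf{x}-\mathbf{y}))$ for $\tau \in [0,1]$, which is well defined because the domain is convex and hence contains the entire segment. By the chain rule $g'(\tau) = \nabla f(\mathbf{y}+\tau(\mathbf{x}-\mathbf{y}))^{\top} (\mathbf{x}-\mathbf{y})$, and the fundamental theorem of calculus gives
\[
f(\mathbf{x}) - f(\mathbf{y}) = \int_0^1 \nabla f(\mathbf{y}+\tau(\mathbf{x}-\mathbf{y}))^{\top} (\mathbf{x}-\mathbf{y}) \, d\tau.
\]
Subtracting the identity $\nabla f(\mathbf{y})^{\top}(\mathbf{x}-\mathbf{y}) = \int_0^1 \nabla f(\mathbf{y})^{\top}(\mathbf{x}-\mathbf{y}) \, d\tau$ and then pulling the absolute value inside the integral, I would bound the integrand by Cauchy--Schwarz as $\lVert \nabla f(\mathbf{y}+\tau(\mathbf{x}-\mathbf{y})) - \nabla f(\mathbf{y}) \rVert \cdot \lVert \mathbf{x}-\mathbf{y} \rVert$. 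The $\beta$-smoothness hypothesis controls the first factor by $\beta \tau \lVert \mathbf{x}-\mathbf{y} \rVert$, and integrating $\int_0^1 \beta \tau \, d\tau = \beta/2$ yields exactly the desired $\frac{\beta}{2}\lVert \mathbf{x}-\mathbf{y}\rVert^2$.

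Finally, the first inequality follows immediately from the second by swapping $\mathbf{x}$ and $\mathbf{y}$: the expression inside the absolute value merely changes sign, while $\lVert \mathbf{x}-\mathbf{y}\rVert^2$ is symmetric, so no additional computation is required. I do not anticipate a genuine obstacle here, since this is the standard descent lemma; the only point worth stating carefully is that the argument invokes the smoothness bound \emph{pointwise} along the segment $[\mathbf{y},\mathbf{x}]$, which is legitimate precisely because the convexity of the domain guarantees that this segment stays inside it.
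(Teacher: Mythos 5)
Your proposal is correct and follows essentially the same route as the paper's proof: an auxiliary scalar function along the segment, the fundamental theorem of calculus, Cauchy--Schwarz together with the $\beta$-smoothness bound $\lVert \nabla f(\mathbf{y}+\tau(\mathbf{x}-\mathbf{y}))-\nabla f(\mathbf{y})\rVert \leq \beta\tau\lVert\mathbf{x}-\mathbf{y}\rVert$, and a swap of $\mathbf{x}$ and $\mathbf{y}$ to obtain the companion inequality. The only difference is cosmetic --- you prove the $\nabla f(\mathbf{y})$ version first and parameterize the segment from $\mathbf{y}$, whereas the paper starts from the $\nabla f(\mathbf{x})$ version --- so no substantive comparison is needed.
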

\begin{proof}
	Let us define an auxiliary function $ g(t) = f(\mathbf{x}+ t(\mathbf{y}-\mathbf{x}) ) $. We observe that $ g(0)=f(\mathbf{x}) $ and $ g(1)=f(\mathbf{y}) $. The derivative of $ g(t) $ is 
	\[ g'(t) = \nabla f(\mathbf{x}+ t(\mathbf{y}-\mathbf{x}))^{\top} (\mathbf{y}-\mathbf{x}). \]
	We have 
	\begin{equation*}
		f(\mathbf{y})-f(\mathbf{x})=g(1)-g(0)=\int_{0}^{1} g'(t) dt = \int_{0}^{1} \nabla f(\mathbf{x}+ t(\mathbf{y}-\mathbf{x}))^{\top} (\mathbf{y}-\mathbf{x}) dt.
	\end{equation*}
	The left-hand side of the first inequality is equal to
	\begin{dmath*}
		\left| \int_{0}^{1} \nabla f(\mathbf{x}+ t(\mathbf{y}-\mathbf{x}))^{\top} (\mathbf{x}-\mathbf{y}) dt-\nabla f(\mathbf{x})^{\top} (\mathbf{x}-\mathbf{y})  \right| 
		= \left| \int_{0}^{1}( \nabla f(\mathbf{x}+ t(\mathbf{y}-\mathbf{x})) - \nabla f(\mathbf{x}) )^{\top} (\mathbf{x}-\mathbf{y})  \right| dt
		\leq \int_{0}^{1} \left| ( \nabla f(\mathbf{x}+ t(\mathbf{y}-\mathbf{x})) - \nabla f(\mathbf{x}) )^{\top} (\mathbf{x}-\mathbf{y})   \right| dt
		\leq \int_{0}^{1} \lVert \nabla f(\mathbf{x}+ t(\mathbf{y}-\mathbf{x})) - \nabla f(\mathbf{x}) \rVert \lVert \mathbf{x}-\mathbf{y} \rVert dt
		\leq \int_{0}^{1} \beta t \lVert \mathbf{x}-\mathbf{y} \rVert^2 dt
		= \frac{\beta}{2} \lVert \mathbf{x}-\mathbf{y} \rVert.
	\end{dmath*}
	Exchanging $ \mathbf{x} $ and $ \mathbf{y} $ in the first inequality, we obtain the second one immediately.
\end{proof}

\begin{lem}[Proposition~4 in \cite{bian16guaranteed}]
	\label{lem:concave}
	A continuous DR-submodular function is concave along any non-negative
	direction and any non-positive direction.
\end{lem}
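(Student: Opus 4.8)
The plan is to reduce the multidimensional claim to a one-dimensional statement about a scalar function. Fix a point $\mathbf{x}\in\mathcal{X}$ and a direction $\mathbf{v}$, and define the auxiliary function $g(t)\triangleq f(\mathbf{x}+t\mathbf{v})$ over all $t$ for which $\mathbf{x}+t\mathbf{v}\in\mathcal{X}$. By definition, $f$ is concave along the direction $\mathbf{v}$ precisely when $g$ is concave, and for a differentiable $g$ this is equivalent to showing that $g'$ is non-increasing in $t$. Using the chain rule, $g'(t)=\nabla f(\mathbf{x}+t\mathbf{v})^{\top}\mathbf{v}$, so the whole argument comes down to controlling how the inner product of the gradient with $\mathbf{v}$ evolves as $t$ grows.

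The key tool is the antitone-gradient characterization of DR-submodularity recalled in \cref{sec:submodularity}: if $\mathbf{a}\leq\mathbf{b}$ elementwise, then $\nabla f(\mathbf{a})\geq\nabla f(\mathbf{b})$ elementwise. I would then split into the two cases of the statement. When $\mathbf{v}\geq 0$, take $t_1<t_2$; then $\mathbf{x}+t_1\mathbf{v}\leq\mathbf{x}+t_2\mathbf{v}$, so antitonicity gives $\nabla f(\mathbf{x}+t_1\mathbf{v})\geq\nabla f(\mathbf{x}+t_2\mathbf{v})$ coordinatewise. Pairing this with $\mathbf{v}\geq 0$ yields $g'(t_1)\geq g'(t_2)$, so $g'$ is non-increasing. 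When $\mathbf{v}\leq 0$, for $t_1<t_2$ we instead have $\mathbf{x}+t_1\mathbf{v}\geq\mathbf{x}+t_2\mathbf{v}$, so antitonicity flips to $\nabla f(\mathbf{x}+t_1\mathbf{v})\leq\nabla f(\mathbf{x}+t_2\mathbf{v})$. The difference $g'(t_1)-g'(t_2)=(\nabla f(\mathbf{x}+t_1\mathbf{v})-\nabla f(\mathbf{x}+t_2\mathbf{v}))^{\top}\mathbf{v}$ is then a sum of products of two non-positive quantities (a non-positive gradient difference against a non-positive $v_i$), hence non-negative. In both cases $g'$ is non-increasing, and therefore $g$ is concave.

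As an alternative route when $f$ is twice differentiable, I could invoke the Hessian characterization instead: $g''(t)=\mathbf{v}^{\top}\nabla^{2}f(\mathbf{x}+t\mathbf{v})\mathbf{v}=\sum_{i,j}v_iv_j\frac{\partial^{2}f}{\partial x_i\partial x_j}$. Under DR-submodularity every Hessian entry is non-positive, and whether $\mathbf{v}$ is non-negative or non-positive, every product $v_iv_j\geq 0$; thus $g''(t)\leq 0$ and concavity follows directly. This version is more transparent but requires the extra smoothness assumption, so the first-order argument above is the cleaner one to present under the stated hypotheses.

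The step I expect to require the most care is the sign bookkeeping in the non-positive direction case, where both the inequality supplied by antitonicity and the sign of the entries of $\mathbf{v}$ reverse simultaneously. Nothing deep is happening, but it is the point at which an off-by-a-sign slip is easiest to make, so I would verify explicitly that the two reversals conspire to give $g'(t_1)\geq g'(t_2)$ rather than the opposite.
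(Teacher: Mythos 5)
Your proof is correct. Note that the paper never proves this lemma itself: it is imported as Proposition~4 of \citet{bian16guaranteed} and used as a black box in the proof of \cref{thm:guarantee-frank-wolfe}, so there is no internal proof to compare against. Your first-order argument is the standard one (and essentially the one in the cited reference): restrict to the scalar function $g(t)=f(\mathbf{x}+t\mathbf{v})$, use the antitone-gradient definition of DR-submodularity adopted in \cref{sec:submodularity}, and check that $g'$ is non-increasing. Your sign bookkeeping in the delicate case $\mathbf{v}\leq 0$ is right: for $t_1<t_2$ one has $\mathbf{x}+t_1\mathbf{v}\geq\mathbf{x}+t_2\mathbf{v}$, so antitonicity gives a coordinatewise non-positive gradient difference, which paired against the non-positive entries of $\mathbf{v}$ yields $g'(t_1)-g'(t_2)\geq 0$; the two reversals cancel as you anticipated. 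The Hessian variant you sketch is also valid but, as you observe, needs twice differentiability, which the lemma as stated does not assume, so the first-order version is the one to keep.
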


\cref{lem:concave} implies that if $ f $ is continuous DR-submodular, fixing 
any $ x $ in its domain, $ g(z)\triangleq f(\mathbf{x}+z\mathbf{v}) $ is 
concave in $ z $ as long as $ \mathbf{v}\geq 0 $ holds elementwise.
Now we present the proof of \cref{thm:guarantee-frank-wolfe}.
\begin{proof}
As the first step, let us fix $ t $ and $ k $.	Since $ f_t $ is $ \beta 
$-smooth, by Lemma~\ref{lem:beta-smooth}, 
	for any $\xi\geq0$ and ${\bf x},{\bf v}\in\mathbb{R}_{\geq0}^{n}$,
	we have 
	\begin{dmath*}
		f_t(\mathbf{x}+\xi \mathbf{v}) -f_t(\mathbf{x})  - \nabla f_t(\mathbf{x})^{\top} (\xi \mathbf{v}) \geq - \frac{\beta}{2} \lVert \xi \mathbf{v} \rVert^2
	\end{dmath*}
	Let $ L \triangleq \beta R^2 $. We deduce
	\begin{equation*}
	 f_{t}({\bf x}_{t}(k+1))-f_{t}({\bf x}_{t}(k))
	=  f_{t}({\bf x}_{t}(k)+\frac{1}{K}{\bf v}_{t}^{k})-f_{t}({\bf x}_{t}(k))
	\geq  \frac{1}{K}\langle{\bf v}_{t}^{k},\nabla f_{t}({\bf 
	x}_{t}(k))\rangle-\frac{L}{2K^2}. 
	\end{equation*}
	We sum the above equation over $t$ and obtain
	\begin{dmath*}
		\sum_{t=1}^{T}f_{t}({\bf x}_{t}(k+1))-f_{t}({\bf 
		x}_{t}(k))\geq\sum_{t=1}^{T}\frac{1}{K}\langle{\bf v}_{t}^{k},\nabla 
		f_{t}({\bf x}_{t}(k))\rangle-\frac{LT}{2K^2}. 
	\end{dmath*}
	The RFTL algorithm instance $\mathcal{E}^{k}$ finds $\{{\bf v}_{t}^{k}:1\leq t\leq T\}$
	such that
	\begin{equation*}
	 \sum_{t=1}^{T}\langle{\bf v}^{k*},\nabla f_{t}({\bf x}_{t}(k))\rangle-\sum_{t=1}^{T}\langle{\bf v}_{t}^{k},\nabla f_{t}({\bf x}_{t}(k))\rangle
	\leq  r^{k}\leq 2DG\sqrt{T},
	\end{equation*}
	where
	\[
	{\bf v}^{k*}=\argmax_{{\bf v}\in\mathcal{P}}\sum_{t=1}^{T}\langle{\bf v},\nabla f_{t}({\bf x}_{t}(k))\rangle
	\]
	and $r^{k}$ is the total regret that the RFTL instance suffers by
	the end of the $T$th iteration. According to the regret bound of
	the RFTL, we know that $r^{k}\leq2DG\sqrt{T}$. Therefore,
	\begin{equation*}
	 \sum_{t=1}^{T}f_{t}({\bf x}_{t}(k+1))-f_{t}({\bf x}_{t}(k))
	\geq  \frac{1}{K}\left(\sum_{t=1}^{T}\langle{\bf v}^{k*},\nabla f_{t}({\bf 
	x}_{t}(k))\rangle-r^{k}\right)-\frac{LT}{2K^2}. 
	\end{equation*}
	We define ${\bf x}^{*}\triangleq\argmax_{{\bf v}\in\mathcal{P}}\sum_{t=1}^{T}f_{t}({\bf v})$ and
	${\bf w}_{t}^{k}=({\bf x}^{*}-{\bf x}_{t}(k))\vee0$.
	For every $t$, we have ${\bf w}_{t}^{k}=({\bf x}^{*}-{\bf x}_{t}(k))\vee0\leq{\bf x}^{*}$.
    It is obvious
	that ${\bf w}_{t}^{k}
    \geq0$. 
    Therefore we deduce that ${\bf w}_{t}^{k}\in\mathcal{X}$.
	Due to the concavity of $f_{t}$ along any non-negative direction (see 
	\cref{lem:concave}),
	we have
	\[
	f_{t}({\bf x}_{t}(k)+{\bf w}_{t}^{k})-f_{t}({\bf x}_{t}(k))\leq\langle{\bf w}_{t}^{k},\nabla f_{t}({\bf x}_{t}(k))\rangle.
	\]
	In light of the above equation, we obtain a lower bound for 
	$\sum_{t=1}^{T}\langle{\bf v}^{k*},\nabla f_{t}({\bf x}_{t}(k))\rangle$:
	\begin{align*}
	 \sum_{t=1}^{T}\langle{\bf v}^{k*},\nabla f_{t}({\bf x}_{t}(k))\rangle
	\geq & \sum_{t=1}^{T}\langle
    \mathbf{x}^{*}
    ,\nabla f_{t}({\bf x}_{t}(k))\rangle\\
	\stackrel{(a)}{\geq} & \sum_{t=1}^{T}\langle{\bf w}_{t}^{k},\nabla f_{t}({\bf x}_{t}(k))\rangle\\
	\geq & \sum_{t=1}^{T}(f_{t}({\bf x}_{t}(k)+{\bf w}_{t}^{k})-f_{t}({\bf x}_{t}(k)))\\
	= & \sum_{t=1}^{T}(f_{t}({\bf x}^{*}\vee{\bf x}_{t}(k))-f_{t}({\bf x}_{t}(k)))\\
	\geq & \sum_{t=1}^{T}(f_{t}({\bf x}^{*})-f_{t}({\bf x}_{t}(k))).
	\end{align*}
	We use the fact that $\nabla f_{t}({\bf x}_{t}(k))\geq0$ and $\mathbf{x}^*\geq \mathbf{w}_t^k$ entrywise in the inequality
	(a).
	\begin{equation*}
	 \sum_{t=1}^{T}f_{t}({\bf x}_{t}(k+1))-f_{t}({\bf x}_{t}(k))
	\geq  \frac{1}{K}\left(\sum_{t=1}^{T}(f_{t}({\bf x}^{*})-f_{t}({\bf 
	x}_{t}(k)))-r^{k}\right)-\frac{LT}{2K^2}. 
	\end{equation*}
	After rearrangement,
	\begin{equation*}
	 \sum_{t=1}^{T}\left(f_{t}({\bf x}_{t}(k+1))-f_{t}({\bf x}^{*})\right)
	\geq  (1-\frac{1}{K})\sum_{t=1}^{T}\left(f_{t}({\bf x}_{t}(k))-f_{t}({\bf 
	x}^{*})\right)-\frac{1}{K}r^{k}-\frac{LT}{2K^2} .
	\end{equation*}
	Therefore,
	\begin{dmath*}
	 \sum_{t=1}^{T}\left(f_{t}({\bf x}_{t}(K))-f_{t}({\bf x}^{*})\right)
	\geq  
	(1-\frac{1}{K})^K \sum_{t=1}^{T}\left(f_{t}({\bf 
	x}_{t}(0))-f_{t}({\bf x}^{*})\right) 
	-\frac{1}{K}\sum_{k=0}^{K-1}r^{k}-\frac{LT}{2K} 
	= 
	(1-\frac{1}{K})^K\sum_{t=1}^{T}\left(f_{t}(0)-f_{t}({\bf 
	x}^{*})\right)-\frac{1}{K}\sum_{k=0}^{K-1}r^{k}-\frac{LT}{2K}. 
	\end{dmath*}
	
	Since $
	(1-\frac{1}{K})^K\leq 
	e^{-1}$, we have
	\begin{dmath*}
	 \sum_{t=1}^{T}\left(f_{t}({\bf x}^{*})-f_{t}({\bf x}_{t}(K))\right)
	\leq  
	(1-\frac{1}{K})^K\sum_{t=1}^{T}\left(f_{t}({\bf 
	x}^{*})-f_{t}(0)\right) 
	+\frac{1}{K}\sum_{k=0}^{K-1}r^{k}+\frac{LT}{2K} 
	\leq  e^{-1}\sum_{t=1}^{T}\left(f_{t}({\bf x}^{*})-f_{t}(0)\right) 
	+\frac{1}{K}\sum_{k=0}^{K-1}r^{k}+\frac{LT}{2K}. 
	\end{dmath*}
	After rearrangement, we have
	\begin{dmath*}
		\sum_{t=1}^{T}f_{t}({\bf x}_{t}(K)) 
		\geq  (1-1/e)\sum_{t=1}^{T}f_{t}({\bf x}^{*})+e^{-1}\sum_{t=1}^{T}f_{t}(0)
		-\frac{1}{K}\sum_{k=0}^{K-1}r^{k}-\frac{LT}{2K^2}. 
	\end{dmath*}
Plugging in the definition of $ r^k $ gives
	\begin{equation*}
		\sum_{t=1}^{T}f_{t}({\bf x}_{t})=\sum_{t=1}^{T}f_{t}({\bf x}_{t}(K))\geq  (1-1/e)\sum_{t=1}^{T}f_{t}({\bf x}^{*})+e^{-1}\sum_{t=1}^{T}f_{t}(0)-2DG\sqrt{T}-\frac{LT}{2K}.
	\end{equation*}
	Recall that $ \mathbf{x}_t(K) $ is exactly $ \mathbf{x}_t $. Thus equivalently, we have
	\begin{dmath*}
		(1-1/e)\sum_{t=1}^{T}f_{t}({\bf x}^{*})-\sum_{t=1}^{T}f_{t}({\bf x}_{t})\leq-e^{-1}\sum_{t=1}^{T}f_{t}(0)+2DG\sqrt{T}+\frac{ \beta R^2 T}{2K}.
	\end{dmath*}
\end{proof}

\section{Proof of \cref{thm:gradient-ascent}}\label{sec:proof-gradient}
\subsection{Gradient Ascent Case}
The theoretical guarantee of gradient ascent methods applied to concave functions relies on a pivotal property that characterizes concavity: if $ F $ is concave, then $ F(\mathbf{y})-F(\mathbf{x}) \leq \langle \nabla F(\mathbf{x}), \mathbf{y}-\mathbf{x}  \rangle $.
Fortunately, there is a similar property that holds for monotone weakly DR-submodular functions, which is presented in \cref{lem:approximate-concave}.
\begin{lem}\label{lem:approximate-concave}
	Let $F:\mathcal{X}\to\mathbb{R}_{+}$ be a monotone
	and weakly DR-submodular function with parameter $\gamma$. For any
	two vector ${\bf x},{\bf y}\in\mathcal{X}$, we have
	\[
	F({\bf y})-\left(1+\frac{1}{\gamma^{2}}\right)F({\bf x})\leq\frac{1}{\gamma}\left\langle \nabla F({\bf x}),{\bf y}-{\bf x}\right\rangle .
	\]\label{lem:key-lemma}
\end{lem}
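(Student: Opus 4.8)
The plan is to work directly from the \emph{fundamental theorem of calculus} along two axis-aligned segments, rather than from concavity. The natural first instinct would be to invoke \cref{lem:concave}, but that lemma requires genuine DR-submodularity; a merely \emph{weakly} DR-submodular $F$ need not be concave along non-negative directions, so the only structural handle we have is the coordinatewise gradient-ratio bound built into the definition of $\gamma$. Accordingly, I would split the displacement $\mathbf{y}-\mathbf{x}$ into its non-negative and non-positive parts, treat each with the calculus identity, and control the moving gradient along each segment by $\nabla F(\mathbf{x})$ using $\gamma$.

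Concretely, set $\mathbf{a}\triangleq(\mathbf{y}-\mathbf{x})\vee 0\geq 0$ and $\mathbf{b}\triangleq(\mathbf{y}-\mathbf{x})\wedge 0\leq 0$, so that $\mathbf{y}-\mathbf{x}=\mathbf{a}+\mathbf{b}$, with $\mathbf{x}+\mathbf{a}=\mathbf{x}\vee\mathbf{y}$ and $\mathbf{x}+\mathbf{b}=\mathbf{x}\wedge\mathbf{y}$. Writing $\langle\nabla F(\mathbf{x}),\mathbf{y}-\mathbf{x}\rangle=\langle\nabla F(\mathbf{x}),\mathbf{a}\rangle+\langle\nabla F(\mathbf{x}),\mathbf{b}\rangle$, it suffices to lower-bound each term. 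For the non-negative part I would use $F(\mathbf{x}\vee\mathbf{y})-F(\mathbf{x})=\int_{0}^{1}\langle\nabla F(\mathbf{x}+s\mathbf{a}),\mathbf{a}\rangle\,ds$. On this segment $\mathbf{x}+s\mathbf{a}\geq\mathbf{x}$, so the definition of $\gamma$ yields $[\nabla F(\mathbf{x}+s\mathbf{a})]_i\leq\gamma^{-1}[\nabla F(\mathbf{x})]_i$ coordinatewise; since $\mathbf{a}\geq 0$ and $\nabla F\geq 0$ (monotonicity), multiplying and integrating gives $F(\mathbf{x}\vee\mathbf{y})-F(\mathbf{x})\leq\gamma^{-1}\langle\nabla F(\mathbf{x}),\mathbf{a}\rangle$. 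Monotonicity ($\mathbf{x}\vee\mathbf{y}\geq\mathbf{y}$) then delivers $\langle\nabla F(\mathbf{x}),\mathbf{a}\rangle\geq\gamma\bigl(F(\mathbf{y})-F(\mathbf{x})\bigr)$.

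For the non-positive part I would run the identical argument on the segment from $\mathbf{x}\wedge\mathbf{y}$ up to $\mathbf{x}$, namely $F(\mathbf{x})-F(\mathbf{x}\wedge\mathbf{y})=\int_{0}^{1}\langle\nabla F(\mathbf{x}\wedge\mathbf{y}+s(-\mathbf{b})),-\mathbf{b}\rangle\,ds$, where $-\mathbf{b}\geq 0$. Every point on this segment is $\leq\mathbf{x}$, so now $[\nabla F(\cdot)]_i\geq\gamma[\nabla F(\mathbf{x})]_i$, which gives $F(\mathbf{x})-F(\mathbf{x}\wedge\mathbf{y})\geq\gamma\langle\nabla F(\mathbf{x}),-\mathbf{b}\rangle$, hence $\langle\nabla F(\mathbf{x}),\mathbf{b}\rangle\geq-\gamma^{-1}\bigl(F(\mathbf{x})-F(\mathbf{x}\wedge\mathbf{y})\bigr)\geq-\gamma^{-1}F(\mathbf{x})$ using $F\geq 0$. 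Adding the two bounds produces $\langle\nabla F(\mathbf{x}),\mathbf{y}-\mathbf{x}\rangle\geq\gamma F(\mathbf{y})-(\gamma+\gamma^{-1})F(\mathbf{x})$, and dividing through by $\gamma$ is exactly the claimed inequality.

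The step I expect to be the main obstacle is bookkeeping the $\gamma$ factors in the correct direction: on the \emph{ascending} segment the moving gradient is \emph{dominated} by $\gamma^{-1}\nabla F(\mathbf{x})$, whereas on the \emph{descending} segment it \emph{dominates} $\gamma\nabla F(\mathbf{x})$, and each piece needs the opposite (upper versus lower) bound to combine with the sign of $F(\mathbf{y})-F(\mathbf{x})$ and of $F(\mathbf{x})$. The subtlety that makes the coordinatewise ratio pass through the inner product untouched is precisely that $\mathbf{a}\geq 0$, $-\mathbf{b}\geq 0$, and $\nabla F\geq 0$ (the last from monotonicity) all hold simultaneously; losing any one of these signs would break the argument.
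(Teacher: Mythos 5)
Your proof is correct: the decomposition $\mathbf{y}-\mathbf{x}=\mathbf{a}+\mathbf{b}$ with $\mathbf{x}+\mathbf{a}=\mathbf{x}\vee\mathbf{y}$ and $\mathbf{x}+\mathbf{b}=\mathbf{x}\wedge\mathbf{y}$, the coordinatewise gradient-ratio bounds along the ascending and descending segments, and the final use of monotonicity ($F(\mathbf{x}\vee\mathbf{y})\geq F(\mathbf{y})$) and non-negativity ($F(\mathbf{x}\wedge\mathbf{y})\geq 0$) all check out and yield exactly the stated inequality. The paper does not write out a proof of this lemma but defers it to Theorem~4.2 of \citep{hassani2017gradient}, and your argument is essentially the same one given there, so this is the same approach, merely made self-contained.
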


The proof of \cref{lem:key-lemma} can be found in the proof of Theorem~4.2 
in~\citep{hassani2017gradient}. Now we can prove \cref{thm:gradient-ascent} in 
the gradient ascent case.

\begin{proof}
	Let ${\bf x}^{*}=\argmax_{{\bf x}\in\mathcal{P}}\sum_{t=1}^{T}f_{t}({\bf x})$.
	We define $\nabla_{t}\triangleq\nabla f_{t}({\bf x}_{t})$. By the definition of $ \mathbf{x}_{t+1} $ and properties of the projection operator for a convex set, we have
		\begin{equation*}
	\left\Vert {\bf x}_{t+1}-{\bf x}^{*}\right\Vert ^{2}
	  = \left\Vert \Pi_{\mathcal{P}}({\bf x}_{t}+\eta_{t}\nabla_{t})-{\bf x}^{*}\right\Vert ^{2}
	 \leq \left\Vert {\bf x}_{t}+\eta_{t}\nabla_{t}-{\bf x}^{*}\right\Vert ^{2}
	 \leq  \left\Vert {\bf x}_{t}-{\bf x}^{*}\right\Vert ^{2}+\eta_{t}^{2}\left\Vert \nabla_{t}\right\Vert ^{2}-2\eta_{t}\nabla_{t}^{\top}({\bf x}^{*}-{\bf x}_{t}).
	\end{equation*}
	Therefore we deduce 
	\begin{align*}
	\nabla_{t}^{\top}({\bf x}^{*}-{\bf x}_{t}) & \leq\frac{\left\Vert {\bf x}_{t}-{\bf x}^{*}\right\Vert ^{2}-\left\Vert {\bf x}_{t+1}-{\bf x}^{*}\right\Vert ^{2}+\eta_{t}^{2}\left\Vert \nabla_{t}\right\Vert ^{2}}{2\eta_{t}}\\
	& \leq\frac{\left\Vert {\bf x}_{t}-{\bf x}^{*}\right\Vert ^{2}-\left\Vert {\bf x}_{t+1}-{\bf x}^{*}\right\Vert ^{2}}{2\eta_{t}}+\frac{\eta_{t}G^{2}}{2}
	\end{align*}
	By Lemma
	\ref{lem:key-lemma}, we obtain that
	\[
	f_{t}({\bf x}^{*})-\left(1+\frac{1}{\gamma^{2}}\right)f({\bf x}_{t})\leq\frac{1}{\gamma}\left\langle \nabla_{t},{\bf x}^{*}-{\bf x}_{t}\right\rangle .
	\]
	If we define $\frac{1}{\eta_{0}}\triangleq0$, it can be deduced that
	\begin{dmath*}
	 \sum_{t=1}^{T}\left[f_{t}({\bf x}^{*})-\left(1+\frac{1}{\gamma^{2}}\right)f_{t}({\bf x}_{t})\right]
	\leq  \frac{1}{\gamma}\sum_{t=1}^{T}\nabla_{t}^{\top}({\bf x}^{*}-{\bf x}_{t})
	\leq  \frac{1}{\gamma}\left[\frac{1}{2\eta_{t}}\sum_{t=1}^{T}\left(\left\Vert {\bf x}_{t}-{\bf x}^{*}\right\Vert ^{2}-\left\Vert {\bf x}_{t+1}-{\bf x}^{*}\right\Vert ^{2}\right)+\frac{G^{2}}{2}\sum_{t=1}^{T}\eta_{t}\right]
	\leq  \frac{1}{\gamma}\left[\frac{1}{2}\left(\sum_{t=1}^{T}\left\Vert {\bf x}_{t}-{\bf x}^{*}\right\Vert ^{2}(\frac{1}{\eta_{t}}-\frac{1}{\eta_{t-1}})\right)+\frac{G^{2}}{2}\sum_{t=1}^{T}\eta_{t}\right]
	\leq  \frac{1}{\gamma}\left[\frac{D^{2}}{2\eta_{T}}+\frac{G^{2}}{2}\sum_{t=1}^{T}\eta_{t}\right]
	\leq  \frac{3}{2\gamma}DG\sqrt{T}.
	\end{dmath*}
	After rearrangement, it is clear that
	\[
	\frac{\gamma^{2}}{\gamma^{2}+1}\sum_{t=1}^{T}f_{t}({\bf x}^{*})-\sum_{t=1}^{T}f_{t}({\bf x}_{t})\leq\frac{3\gamma DG\sqrt{T}}{2(\gamma^{2}+1)}.
	\]
\end{proof}

\subsection{Stochastic Gradient Ascent Case}

\begin{proof}
	The strategy for the stochastic gradient ascent case is similar to that of the gradient ascent case.
	Again, by the definition of $ \mathbf{x}_{t+1} $,
	we have
	\begin{equation*}
	\left\Vert {\bf x}_{t+1}-{\bf x}^{*}\right\Vert ^{2}
	  = \left\Vert \Pi_{\mathcal{P}}({\bf x}_{t}+\eta_{t}\mathbf{g}_{t})-{\bf x}^{*}\right\Vert ^{2}
	 \leq \left\Vert {\bf x}_{t}+\eta_{t}\mathbf{g}_{t}-{\bf x}^{*}\right\Vert ^{2}
	 \leq \left\Vert {\bf x}_{t}-{\bf x}^{*}\right\Vert ^{2}+\eta_{t}^{2}\left\Vert {\bf g}_{t}\right\Vert ^{2}-2\eta_{t}{\bf g}_{t}^{\top}({\bf x}^{*}-{\bf x}_{t})
	\end{equation*}
	Therefore we deduce
	\begin{equation*}
	{\bf g}_{t}^{\top}({\bf x}^{*}-{\bf x}_{t}) \leq\frac{\left\Vert {\bf x}_{t}-{\bf x}^{*}\right\Vert ^{2}-\left\Vert {\bf x}_{t+1}-{\bf x}^{*}\right\Vert ^{2}+\eta_{t}^{2}\left\Vert {\bf g}_{t}\right\Vert ^{2}}{2\eta_{t}} \leq\frac{\left\Vert {\bf x}_{t}-{\bf x}^{*}\right\Vert ^{2}-\left\Vert {\bf x}_{t+1}-{\bf x}^{*}\right\Vert ^{2}}{2\eta_{t}}+\frac{\eta_{t}G^{2}}{2}
	\end{equation*}
	Similarly, if we define $\frac{1}{\eta_{0}}\triangleq0$ and in light of \cref{lem:approximate-concave}, it can be deduced that
	\begin{dmath*}
	 \sum_{t=1}^{T}\mathbb{E}\left[f_{t}({\bf x}^{*})-\left(1+\frac{1}{\gamma^{2}}\right)f_{t}({\bf x}_{t})\right]
	\leq  \frac{1}{\gamma}\sum_{t=1}^{T}\mathbb{E}\left[\nabla_{t}^{\top}({\bf x}^{*}-{\bf x}_{t})\right]
	=  \frac{1}{\gamma}\sum_{t=1}^{T}\mathbb{E}\left[\mathbb{E}\left[\nabla_{t}^{\top}({\bf x}^{*}-{\bf x}_{t})|{\bf x}_{t}\right]\right]
	=  \frac{1}{\gamma}\sum_{t=1}^{T}\mathbb{E}\left[\mathbb{E}\left[{\bf g}_{t}^{\top}({\bf x}^{*}-{\bf x}_{t})|{\bf x}_{t}\right]\right]
	\leq  \frac{1}{\gamma}\left[\frac{1}{2\eta_{t}}\sum_{t=1}^{T}\mathbb{E}\left[\left\Vert {\bf x}_{t}-{\bf x}^{*}\right\Vert ^{2}-\left\Vert {\bf x}_{t+1}-{\bf x}^{*}\right\Vert ^{2}\right]+
    \frac{G^{2}}{2}\sum_{t=1}^{T}\eta_{t}\right]
	\leq  \frac{1}{\gamma}\left[\frac{1}{2}\left(\sum_{t=1}^{T}\mathbb{E}\left[\left\Vert {\bf x}_{t}-{\bf x}^{*}\right\Vert ^{2}\right](\frac{1}{\eta_{t}}-\frac{1}{\eta_{t-1}})\right)+\frac{G^{2}}{2}\sum_{t=1}^{T}\eta_{t}\right]
	\leq  \frac{1}{\gamma}\left[\frac{D^{2}}{2\eta_{T}}+\frac{G^{2}}{2}\sum_{t=1}^{T}\eta_{t}\right]
	\leq  \frac{3}{2\gamma}DG\sqrt{T}.
	\end{dmath*}
	After rearrangement, it is clear that
	\[
	\frac{\gamma^{2}}{\gamma^{2}+1}\sum_{t=1}^{T}f_{t}({\bf x}^{*})-\sum_{t=1}^{T}\mathbb{E}\left[f_{t}({\bf x}_{t})\right]\leq\frac{3\gamma DG\sqrt{T}}{2(\gamma^{2}+1)}.
	\]
\end{proof}
\end{appendices}
\end{document}